\newtheorem{proposition}{Proposition}
\newcommand{\xmath}[1] {\ensuremath{#1}\xspace}
\newcommand{\blmath}[1] {\xmath{\bm{#1}}}
\newcommand{\Z}{\blmath{Z}}
\newcommand{\Ab}{{\blmath A}}
\newcommand{\Eb}{{\blmath E}}
\newcommand{\Hb}{{\blmath H}}
\newcommand{\Ib}{{\blmath I}}
\newcommand{\Kb}{{\blmath K}}
\newcommand{\Mb}{{\blmath M}}
\newcommand{\Pb}{{\blmath P}}
\newcommand{\Rb}{{\blmath R}}
\newcommand{\Wb}{{\blmath W}}
\newcommand{\Zb}{{\blmath Z}}
\newcommand{\ab}{{\blmath a}}
\newcommand{\bb}{{\blmath b}}
\newcommand{\cb}{{\blmath c}}
\newcommand{\eb}{{\blmath e}}
\newcommand{\hb}{{\blmath h}}
\newcommand{\pb}{{\blmath p}}
\newcommand{\qb}{{\blmath q}}
\newcommand{\xb}{{\blmath x}}
\newcommand{\zb}{{\blmath z}}
\newcommand{\Lc}{\mathcal{L}}
\newcommand{\Nc}{\mathcal{N}}
\newcommand{\Sigmab}{{\boldsymbol {\Sigma}}}
\newcommand{\Rd}{{\mathbb R}}
\newcommand{\phib}{{\boldsymbol{\phi}}}
\newcommand{\beq}{\begin{equation}}
\newcommand{\eeq}{\end{equation}}
\newcommand{\beqa}{\begin{eqnarray}}
\newcommand{\eeqa}{\end{eqnarray}}
\title{Learning Dynamic Graph Representation of Brain Connectome with Spatio-Temporal Attention}
\author{%
  Byung-Hoon~Kim
  \thanks{This work was conducted while the first author was at KAIST}\\
  % \thanks{Use footnote for providing further information
  %   about author (webpage, alternative address)---\emph{not} for acknowledging
  %   funding agencies.} \\
  Department of Psychiatry\\
  % College of Medicine\\
  Institute of Behavioral Sciences in Medicine\\
  % College of Medicine\\
  College of Medicine,
  Yonsei University\\
  % Seoul, South Korea 06273 \\
  \texttt{egyptdj@yonsei.ac.kr}\\
  % examples of more authors
  \And
  Jong Chul Ye \\
  Department of Bio/Brain Engineering\\
  Kim Jaechul Graduate School of AI\\
  KAIST\\
  % Daejeon, South Korea 34141 \\
  \texttt{jong.ye@kaist.ac.kr}\\
  \AND
  Jae-Jin~Kim \\
  Department of Psychiatry\\
  % College of Medicine\\
  Institute of Behavioral Sciences in Medicine\\
  % College of Medicine\\
  College of Medicine,
  Yonsei University\\
  % Seoul, South Korea 06273 \\
  \texttt{jaejkim@yonsei.ac.kr}\\
  % Coauthor \\
  % Affiliation \\
  % Address \\
  % \texttt{email} \\
  % \And
  % Coauthor \\
  % Affiliation \\
  % Address \\
  % \texttt{email} \\
  % \And
  % Coauthor \\
  % Affiliation \\
  % Address \\
  % \texttt{email} \\
}
\begin{document}

\maketitle

\begin{abstract}
  % The abstract paragraph should be indented \nicefrac{1}{2}~inch (3~picas) on
  % both the left- and right-hand margins. Use 10~point type, with a vertical
  % spacing (leading) of 11~points.  The word \textbf{Abstract} must be centered,
  % bold, and in point size 12. Two line spaces precede the abstract. The abstract
  % must be limited to one paragraph.
  Functional connectivity (FC) between regions of the brain can be assessed by the degree of temporal correlation measured with functional neuroimaging modalities.
  Based on the fact that these connectivities build a network, graph-based approaches for analyzing the brain connectome have provided insights into the functions of the human brain.
  The development of graph neural networks (GNNs) capable of learning representation from graph structured data has led to increased interest in learning the graph representation of the brain connectome.
  Although recent attempts to apply GNN to the FC network have shown promising results, there is still a common limitation that they usually do not incorporate the dynamic characteristics of the FC network which fluctuates over time.
  In addition, a few studies that have attempted to use dynamic FC as an input for the GNN reported a reduction in performance compared to static FC methods, and did not provide temporal explainability.
  Here, we propose STAGIN, a method for learning \emph{dynamic} graph representation of the brain connectome with spatio-temporal attention.
  Specifically, a temporal sequence of brain graphs is input to the STAGIN to obtain the dynamic graph representation, while novel READOUT functions and the Transformer encoder provide spatial and temporal explainability with attention, respectively.
  Experiments on the HCP-Rest and the HCP-Task datasets demonstrate exceptional performance of our proposed method.
  Analysis of the spatio-temporal attention also provide concurrent interpretation with the neuroscientific knowledge, which further validates our method.
  Code is available at \url{https://github.com/egyptdj/stagin}
  % Code is available at (anonymized)

\end{abstract}

\section{Introduction}\label{sec:introduction}
% \subsection{Representation learning of brain functional connectivity with graph neural networks}
Neuroimaging modalities provide measurements of brain activity by capturing the signals of neural activity.
Functional magnetic resonance imaging (fMRI) is a non-invasive imaging method that measures the blood-oxygen level dependence (BOLD) in order to estimate the neural activity of the whole brain over time \citep{huettel2004functional}.
Functional connectivity (FC) is defined as the degree of temporal correlation between regions of the brain.
Based on the fact that these connectivities form networks that change over time, graph-based network analysis of brain connectome has been one of the key approaches to understanding how the brain works \citep{bullmore2009complex,bassett2017network,sporns2018graph}.
%
% Functional neuroimaging modalities provide measurements of the brain activity by capturing the signals of neuronal activity.
% Functional magnetic resonance imaging (fMRI) is a non-invasive neuroimaging method that measures the blood-oxygen level dependence (BOLD) to estimate the neuronal activity of the whole brain across time \citep{huettel2004functional}.
% Functional connectivity (FC) is defined as the degree of temporal correlation between the regions of the brain.
% Based on the fact that these connectivities within the brain regions construct temporally varying networks, graph-based network analysis of the brain connectome has been one of the key approaches to understand the function of the brain \citep{bullmore2009complex,bassett2017network,sporns2018graph}.

Graph neural networks (GNNs) are a type of deep neural networks that have recently been successful in learning the representation of graph-structured data \citep{wu2020comprehensive}.
The graph-structured nature of the brain has led to an increased interest in learning the reperesentation of the brain FC network with the GNNs.
% Learning the representation of the brain connectome can be linked to decoding trait or state from human brain signal measurements.
% While currently available methods are yet far behind the capability that can be abused or misued, any attempts that try to decode human brain signal should be aware of privacy concerns and ethical considerations for potential negative societal impact.
% Learning the representation of the brain connectome has both positive and negative potential social implications, as it can be linked to decoding trait or state from human brain signal measurements.
Learning the representation of the brain connectome can be linked to decoding trait or state from human brain signal measurements.
% While accurate brain decoding methods can help find neural biomarkers of important phenotypes, potential negative societal impact related to privacy concerns that arise from abuse or misuse should also be considered.
% Representation learning of the brain connectome still holds a great positive social implication in that it can be linked to the search for new biomarkers of a particular phenotype.
Accordingly, the current trend in studies attempting to apply GNN to the brain connectome is to input the FC graph from either resting-state \citep{ktena2017distance,ktena2018metric,arslan2018graph,ma2018similarity,kim2020understanding,wein2021graph,wu2021connectome} or task fMRI data \citep{li2019graph,li2019graph2,li2020braingnn} and predict a particular phenotype of the subjects, such as gender \citep{ktena2018metric,arslan2018graph,kim2020understanding,kazi2021ia} or presence of a specific disease \citep{ktena2018metric,ma2018similarity,li2019graph,li2019graph2,li2020braingnn,kazi2021ia}.
While these studies have shown potential strengths and opportunities for learning the network representation of the brain, they also suggest limitations of current GNN-based methods.

% \subsection{Extending to dynamic functional connectivity}
One of the most common limitations with previous GNN-based FC network analysis methods is that most of them fail to take advantage of the dynamic properties of the FC network, which fluctuates over time.
Incorporating the dynamic features of the FC network into the neuroimaging analysis has been an important direction in the field of functional neuroimaging \citep{hutchison2013dynamic,preti2017dynamic}.
% Temporal evolution of the network can be captured by the sliding-window approach, which the FC network is computed within temporal window of a prespecified length and the window is strided through time.
A work by \citep{gadgil2020spatio} tried to address this issue by using the Spatial Temporal Graph Convolutional Network (ST-GCN) \citep{yan2018spatial} model to incorporate dynamic features of the FC network.
However, \citep{gadgil2020spatio} reported lower accuracy than other non-dynamic GNN-based FC methods \citep{kim2020understanding,arslan2018graph} in the gender classification experiment, leaving a question about the effectiveness of the dynamic FC method.
In addition, another limitation of the method is that no temporal explainability is provided from the model.
This is a major drawback considering that the goal of applying GNNs to functional neuroimaging methods is not only to achieve high classification accuracy, but also to uncover the functional basis of the brain \citep{kim2020understanding,li2020braingnn}.
Another recent work by \citep{azevedo2020deep}, using GraphNets \citep{battaglia2018relational} and DiffPool \citep{ying2018hierarchical} for the dynamic FC analysis, also suffers from the same limitations in terms of poor classification accuracy and lack of temporal explainability.

% Many other networks that arise around us are also inherently dynamic, such as DYNAMICGRAPHs.
% Accordingly, learning the representation of dynamic graphs has piqued the interest of researchers and has led to development of methods that can embed dynamic graphs using their time information \citep{nguyen2018continuous}.
% However, it is not easy to apply these techniques directly to the dynamic brain graphs because of the different inherent properties of the dynamic brain graphs that do not include any addition or deletion of nodes and that are sampled uniformly over time.
% Nonetheless, our work is inspired by these earlier studies, particularly for the encoding of temporal information and their concatenation to the node features, as proposed in Section \ref{sec:graph_definition} \citep{xu2020inductive,rossi2020temporal}.

Here, we propose Spatio-Temporal Attention Graph Isomorphism Network (STAGIN) for learning the dynamic graph representation of the brain connectome with spatio-temporal attention.
The proposed method exploits the temporal features of the dynamic FC network graphs to improve the classification accuracy of the model.
In particular, we address the issue that the node features of the input dynamic graph should contain temporal information and concatenate encoded timestamp with the node features (Section \ref{sec:graph_definition}).
In addition, the proposed method includes novel attention-based READOUT modules (Section \ref{sec:spatial_attention}) and the Transformer encoder \citep{vaswani2017attention}  (Section \ref{sec:temporal_attention}) in order to further improve the classification performance and provide spatial-temporal explainability at the same time.
STAGIN achieves state-of-the-art performance with the Human Connectome Project (HCP) dataset \citep{van2013wu} in gender classification for resting-state fMRI and task decoding for task fMRI.
 % in gender classification on resting-state fMRI and task decoding on task fMRI with the Human Connectome Project (HCP) dataset \citep{van2013wu}.
We inherit k-means clustering analysis of the resting-state dynamic FC \citep{allen2014tracking} and general linear model (GLM) statistical mapping of task fMRI \citep{friston1994statistical} for interpreting the spatio-temporal attention learned from STAGIN, which are widely accepted analysis methods for the fMRI data.
% We extend previous analysis methods on resting-state dynamic FC analysis including the point-process analysis (PPA) \citep{tagliazucchi2012criticality} and the k-means clustering \citep{allen2014tracking} to the analysis of spatio-temporal attention learned by STAGIN.
The interpretation of the learned spatio-temporal attention replicates neuroscientific findings from previous large-scale fMRI studies in both resting-state and task fMRI, which further validates our proposed method.
% We employ general linear model (GLM) \citep{friston1994statistical} to the attention analysis of the task fMRI data and replicate neuroscientific findings from previous large-scale fMRI studies in both resting-state and task fMRI analyses.
% It should also be noted that to the best of our knowledge, this is the first study to demonstrate the capability to learn the dynamic graph representation of both resting-state and task fMRI data with a single framework.

Our work holds potential societal impact in that brain decoding methods can be linked to finding neural biomarkers of important phenotypes or diseases.
However, potential negative impact related to privacy concerns that arise from abuse or misuse of accurate decoding methods should also be noted.
Although our method is yet behind the decoding capability that can be abused or misused, our research cannot still be free from these ethical considerations.

\section{Related works}\label{sec:related}
\subsection{Graph Neural Network on Dynamic Graphs}
Many networks that arise around us are inherently dynamic, with changes in the existence of nodes and edges over time.
Learning the representation of dynamic graphs has piqued the interest of researchers and has led to development of methods that can embed dynamic graphs using their time information \citep{nguyen2018continuous}.
Methods that incorporate attention for learning the representation of dynamic graphs have also been proposed \citep{xu2020inductive,rossi2020temporal}.
However, it is not easy to apply these techniques directly to the dynamic brain graphs because of the different inherent properties of the dynamic brain graphs that do not include any addition or deletion of nodes and are sampled uniformly over time.
Nonetheless, our work is inspired by these earlier studies, particularly for the encoding of temporal information and their concatenation to the node features, as proposed in Section \ref{sec:graph_definition} \citep{xu2020inductive,rossi2020temporal}.

\subsection{Attention in Graph Neural Networks}
Bringing attention to the GNNs is a topic that is being actively studied in the field of geometric deep learning \citep{lee2019attention}.
One of the most successful uses of attention is to compute the attention at edges of the graph and scale the importance of the links when the features of the neighborhood node are aggregated \citep{velivckovic2017graph,brody2021attentive},
% This attention-based aggregation scheme
often providing performance gain in learning the representation of input graphs.
% In our work, however, this scheme was not used in order to keep the original connectivity pattern on the later layers of the model.
Another stream of applying attention to the GNNs comes with the motivation to define a pooling function on the graph domain.
Since it is not straightforward to decide on what basis the coarsening should be carried out for graph structured data,
% Unlike natural images which the data is on a regular grid, it is not straightforward to decide on what basis the coarsening should be carried out for graph structured data.
works such as \citep{gao2019graph,lee2019self,ranjan2020asap} have addressed this problem by selecting the nodes with top scores computed from projecting the node feature vectors into a learnable parameter vector, or from a GNN layer aggregated local graph features.
% The work by \citep{gao2019graph} addressed this problem by projecting the node feature vectors into a learnable parameter vector  and selecting the top $k$ nodes based on the projected score. Subsequent studies have expanded this idea further to include local graph structures in the node scoring through variants of the graph convolution layers \citep{lee2019self,ranjan2020asap,knyazev2019understanding}.
Although the motivation may have been different, these graph pooling methods are closely related to the spatial attention modules that we propose in Section \ref{sec:spatial_attention} in that they exploit learned relative scores across the vertices of the graph.
While some works have already been aware that the appropriate use of node-wise attention can improve performance of downstream tasks \citep{yan2020learning,fan2020structured}, we note that previous methods tend to score attention based on randomly initialized parameters or local graph structures which may be suboptimal for graph classification tasks that require taking the whole graph feature into account.
% Accordingly, we propose two attention-based READOUT modules that exploit global average-pooled whole graph feature as prior information to score attention across the nodes.

\section{Theory}\label{sec:theory}
\subsection{Problem definition}
The goal of our study is to train a neural network
% \begin{equation}
$$
  f: G_{\text{dyn}} \rightarrow \hb_{G_{\text{dyn}}},
$$
% \end{equation}
where $G_{\text{dyn}} = (G(1),...,G(T))$ is the sequence of brain graphs with $T$ timepoints and $\hb_{G_{\text{dyn}}} \in \mathbb{R}^{D}$ is the vector representation of the dynamic graph $G(t)$ with length $D$.
The graph $G(t) = (V(t),E(t))$ at time $t$ is a pair of vertex set $V(t)= \{ \xb_{1}(t),..., \xb_{N}(t) \}$ of $N$ nodes and edge set $E(t) = \Bigl\{ \{ \xb_{i}(t),\xb_{j}(t) \} \mid j\in \Nc (i), i \in \{1,...N\} \Bigr\}$ where $\Nc (i)$ denotes the neighborhood of the vertex $i$.
If $f$ learns to extract a disentangled representation of the dynamic brain graph $G_{\text{dyn}}$, then the classification of a certain phenotypic characteristic (e.g. gender) from $\hb_{G_{\text{dyn}}}$ can be performed with a linear mapping as a downstream task.
Another important consideration in this work is to ensure the explainability of the model $f$, being able to inform us which part of the brain at which timepoint was considered important when extracting the meaningful representation $\hb_{G_{\text{dyn}}}$.
Specifically, we formulate $f = q \circ g$ as a composition of the GNN $g$ and the Transformer encoder $q$, where $g$ outputs the set of graph representations $\hb_{G(t)}$ from each timepoint and $q$ exploits self-attention to integrate $\hb_{G(t)}$ into the final representation $\hb_{G_{\text{dyn}}}$:
\begin{align}
  g &: G_{\text{dyn}} \rightarrow (\hb_{G(1)},...,\hb_{G(T)}), \\
  q &: (\hb_{G(1)},...,\hb_{G(T)}) \rightarrow \hb_{G_{\text{dyn}}}.
\end{align}
We will omit timepoint notation $(t)$ for brevity, whenever it is not of contextual importance.

\subsection{Graph Isomorphism Network}
The GNNs are generally composed of functions that (i) integrate the node features from its neighbors, and (ii) embed the integrated information with a nonlinear transformation to obtain the next layer node features.
These functions are called AGGREGATE, and COMBINE functions, respectively, and the choice of these functions define many variants of the GNN,
\begin{align}
 \ab_{v}^{(k)}&=\mathtt{AGGREGATE}^{(k)}\Bigl( \Bigl\{ \hb_{u}^{(k-1)}:u \in \Nc (v) \Bigr\} \Bigr) , \\
 \hb_{v}^{(k)}&=\mathtt{COMBINE}^{(k)} \Bigl( \hb_{v}^{(k-1)}, \ab_{v}^{(k)} \Bigr),
\end{align}
where $\hb_{v}^{(k)}$ denotes the feature vector of node $v$ at layer $k$ and $\hb_{v}^{(0)}:=\xb_{v}$.

The Graph Isomorphism Network (GIN) is a variant of the GNN suitable for graph classification tasks, which is known to be as powerful as the WL-test under certain assumptions of injectivity \citep{xu2018powerful}.
The GIN typically defines sum as the AGGREGATE and a multi-layer perceptron (MLP) with two layers as the COMBINE updating the node representation $\hb_{v}^{(k)}$ at layer $k$ \citep{xu2018powerful} by :
\begin{align} \label{eq:gin}
  \hb_{v}^{(k)} &= \texttt{MLP}^{(k)}\Bigl( (1+\epsilon^{(k)})\cdot \hb_{v}^{(k-1)} + \sum_{u \in \Nc (v)}\hb_{u}^{(k-1)} \Bigr),
\end{align}
where $\epsilon$ is a learnable parameter initialized with zero.
Equation \eqref{eq:gin} can be easily reformulated into the matrix form \citep{kim2020understanding} by:
\begin{align}
 \Hb^{(k)} &=\sigma\left( (\epsilon^{(k)}\cdot \Ib +\Ab)\Hb^{(k-1)}  \Wb^{(k)}\right),
 \label{eq:gin_matrix}
\end{align}
where
$$
\Hb^{(k)} = \begin{bmatrix}\hb_{1}^{(k)},\cdots,\hb_{N}^{(k)} \end{bmatrix} \in \mathbb{R}^{D\times N}
$$
is the stack of node feature vectors, $\Ib$ is the identity matrix, $\Ab$ is the adjacency matrix between the node features, $\Wb$ is the network weights of the MLP, and $\sigma$ is the nonlinearity function.

The READOUT function takes the updated node features $\hb_{v}^{(k)}$ to compute the representation of the whole graph:
\begin{align} \label{eq:readout}
  \hb_{G}^{(k)} = \texttt{READOUT}\Bigl( \{ \hb_{v}^{(k)} \mid v \in G \} \Bigr).
\end{align}
In general, the READOUT function is defined simply as computing the sum or average of the input node features.
This is equivalent to multiplication with the length $N$ pooling vectors $\phib_{\text{sum}}^{\top} = [1,...,1]$ or $\phib_{\text{mean}}^{\top} = [1/N,...,1/N]$ for the matrix form:
\begin{align} \label{eq:readout_mean}
  \hb_{G}^{(k)} = \Hb^{(k)} \phib_{\text{mean}}.
\end{align}

\subsection{Encoder-decoder understanding of GNNs}
Although formulating the GIN \eqref{eq:gin} as a combination of AGGREGATE and COMBINE function might not suggest its close relationship with convolutional neural networks (CNNs) at first glance, previous works by \citep{kim2020understanding,cheung2020graph} show that the matrix formulation of the GIN operation \eqref{eq:gin_matrix} can be thought of a CNN layer with shift operation of the convolution as the adjacency matrix $\Ab$.
We extend the understanding of encoder-decoder CNN as a framelet expansion \citep{ye2019understanding,ye2018deep} to the GIN to formulate node feature vectors $\Hb^{(k)}$ at layer $k$ with respect to the input node feature $\xb_{i}$ as:
\begin{equation} \label{eq:gin_encoder}
  \mathrm{Vec}\left(\Hb^{(k)}\right) = \Sigmab^{(k)} \Eb^{(k) \top} \cdots \Sigmab^{(1)} \Eb^{(1) \top}\xb,\quad \mbox{where}\quad \xb:=\mathrm{Vec}\left( \begin{bmatrix}\xb_{1},\cdots,\xb_{N} \end{bmatrix}\right)
\end{equation}
where $\mathrm{Vec}(\cdot)$ denotes the vectorization operation, and the $k$-th layer encoder matrix $\Eb^{(k)}$ is defined as
$$
  \Eb^{(k)} = \Wb^{(k)} \otimes (\epsilon ^{(k)}\cdot \Ib + \Ab^T)
$$
where $\otimes$ refers to the Kronecker product,
and $\Sigmab^{(k)}$ is the diagonal matrix with values 1 or 0 depending on the activation pattern of the nonlinearity.
Now, $\phib_{\text{mean}}$ of equation \eqref{eq:readout_mean} can be thought as the decoder at the $k$-th layer which yields the whole graph feature vector from the encoded node feature vectors.

\begin{proposition}\label{prop:constant}
  The READOUT function $\phib_{\text{mean}}$ in \eqref{eq:readout_mean} generates a decoder with fixed constant bases.
   % frame synthesis operator with node features as the frames.
\end{proposition}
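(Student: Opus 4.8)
The plan is to recast the mean READOUT of \eqref{eq:readout_mean} into exactly the vectorized encoder--decoder form of \eqref{eq:gin_encoder}, so that the operator playing the role of the decoder can be read off explicitly and its bases inspected. First I would apply the standard Kronecker identity $\mathrm{Vec}(\Ab\Xb\Bb) = (\Bb^\top \otimes \Ab)\,\mathrm{Vec}(\Xb)$ to $\hb_G^{(k)} = \Hb^{(k)}\phib_{\text{mean}}$. Since $\hb_G^{(k)}$ is already a column vector we have $\mathrm{Vec}(\hb_G^{(k)}) = \hb_G^{(k)}$, and choosing $\Ab = \Ib$, $\Xb = \Hb^{(k)}$, $\Bb = \phib_{\text{mean}}$ gives
$$
  \hb_G^{(k)} = \left( \phib_{\text{mean}}^\top \otimes \Ib \right)\,\mathrm{Vec}\!\left( \Hb^{(k)} \right).
$$
Composing with the encoder expansion \eqref{eq:gin_encoder} collapses everything into a single chain $\hb_G^{(k)} = \Db^{(k)}\,\Sigmab^{(k)}\Eb^{(k)\top}\cdots\Sigmab^{(1)}\Eb^{(1)\top}\xb$ in which $\Db^{(k)} := \phib_{\text{mean}}^\top \otimes \Ib$ sits precisely in the decoder slot, mirroring the synthesis (reconstruction) operator of the framelet reading of encoder--decoder networks \citep{ye2019understanding,ye2018deep}.

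The second step is to identify the decoder bases with the synthesis vectors of $\Db^{(k)}$ and to verify that they are fixed and constant. Because every entry of $\phib_{\text{mean}}$ equals the constant $1/N$, the Kronecker product $\phib_{\text{mean}}^\top \otimes \Ib$ is built from $N$ identical blocks $\tfrac1N\Ib$, so each synthesis vector is a scaled standard basis vector whose single nonzero entry is the constant $1/N$. The decisive observation is that none of these entries depends on the input node features $\xb$ or on any trainable quantity: the MLP weights $\Wb^{(k)}$, the adjacency $\Ab$, and the learnable scalars $\epsilon^{(k)}$ all reside inside the encoder factors $\Eb^{(k)}$ and the activation masks $\Sigmab^{(k)}$, and never enter $\Db^{(k)}$. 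Hence $\phib_{\text{mean}}$ induces a decoder whose bases are the same fixed, constant-valued vectors for every input and at every stage of training, which is exactly the assertion of the proposition.

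I do not anticipate a genuine computational obstacle; the argument is an identification rather than an estimate. The only points demanding care are matching the (column-major) vectorization convention already used in \eqref{eq:gin_encoder}, so the Kronecker factors land on the correct side and $\Eb^{(k)\top}$ is consistent with the matrix GIN \eqref{eq:gin_matrix}, and stating precisely what is meant by a ``basis'' here, namely the synthesis vectors of the decoder matrix $\Db^{(k)}$. Finally, I would note that repeating the computation for $\phib_{\text{sum}}$ gives the identical structure with $1/N$ replaced by $1$, showing the constancy is a feature of averaging/summing READOUTs in general and not of the particular normalization; this remark also motivates why a data-adaptive, attention-based READOUT (Section \ref{sec:spatial_attention}) is required to obtain input-dependent, non-constant decoder bases.
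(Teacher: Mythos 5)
Your proposal is correct and follows essentially the same route as the paper's own proof: vectorize the READOUT via the Kronecker identity to place $\phib_{\text{mean}}^\top \otimes \Ib$ in the decoder slot of the framelet expansion \eqref{eq:gin_encoder}, then observe that this matrix (whose columns are the decoder/synthesis bases) contains no dependence on the input $\xb$ or the activation masks, unlike the encoder factors. Your additional remarks — the explicit block structure $\tfrac{1}{N}\Ib$ of the decoder and the observation that $\phib_{\text{sum}}$ yields the same conclusion — are harmless refinements of the identical argument.
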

\begin{proof}
From the READOUT function \eqref{eq:readout_mean},
we have
\begin{align*}
  \hb_{G}^{(k)} =& \mathrm{Vec} \left(\hb_{G}^{(k)}\right)
  = \mathrm{Vec} \left( \Hb^{(k)} \phib_{\text{mean}}\right) = \left(\phib_{\text{mean}}^T \otimes \Ib \right) \mathrm{Vec} \left( \Hb^{(k)}\right)\\
  =&  \left(\phib_{\text{mean}}^T \otimes \Ib \right) \Sigmab^{(k)} \Eb^{(k) \top} \cdots \Sigmab^{(1)} \Eb^{(1) \top}\xb
  \end{align*}
Now let $\bb_i$ and $\tilde \bb_i$ denote the $i$-th column of the encoder matrix $ \Eb^{(1)}\Sigmab^{(1)}\cdots \Eb^{(k)}\Sigmab^{(k)}$
and the decoder matrix $\left(\phib_{\text{mean}}^T \otimes \Ib \right)$, respectively. Then, it is straight to obtain the following representation:
 \begin{align*}
  \hb_{G}^{(k)} =\sum_{i} \langle \bb_i, \xb\rangle \tilde \bb_i
    \end{align*}
Therefore, we can see that although the encoder basis $\bb_i$ is a function of $\xb$, the decoder basis $\tilde \bb_i$ is a constant.
%
%  Encoder expansion of the GIN $\eqref{eq:gin_encoder}$ can be formulated as
%  \begin{align}
%    h_{i,j}^{(k)} = \langle \bb_{i}, \xb_{j} \rangle,
%  \end{align}
%  where the basis $\bb_{i}$ of the encoder is given by
%  $$
%    \begin{bmatrix}\bb_{1},\cdots , \bb_{D}\end{bmatrix}^{\top} = \Eb^{(1)}\Sigmab^{(1)}\cdots \Eb^{(k)}\Sigmab^{(k)}.
%  $$
%  % and $\xb_{j}$, $h_{i,j}$ denote the $j$-th column of $\Xb$ and $(i,j)$-th element of $\Hb$, respectively.
%  Decoding the whole graph feature from the encoded node features can be thought of:
%  \begin{align}
%    \hb_{G}^{(k)}[i] = \frac{1}{N} \sum_{j=1}^{N} h_{i,j}^{(k)} = \sum_{j=1}^{N} \langle \bb_{i}, \xb_{j} \rangle \tilde{b}_{i},
%  \end{align}
%  where the $i$-th element of the decoder basis vector $\tilde{\bb} \in \mathbb{R}^{N}$ correspond to the constant value $\tilde{b}_{i} = \frac{1}{N}$ irrespective of the index, suggesting the equality of fixed constant decoder frame basis $\tilde{\bb}$ and the READOUT function $\phib_{\text{mean}}$.
\end{proof}

We address the issue that the decoder being a constant function can restrict the expressivity of the neural network, and explore adaptive READOUT functions with attention in Section \ref{sec:spatial_attention}.

\section{STAGIN: Spatio-Temporal Attention Graph Isomorphism Network}

\begin{figure}
  \centering
  \includegraphics[width=1.0\textwidth]{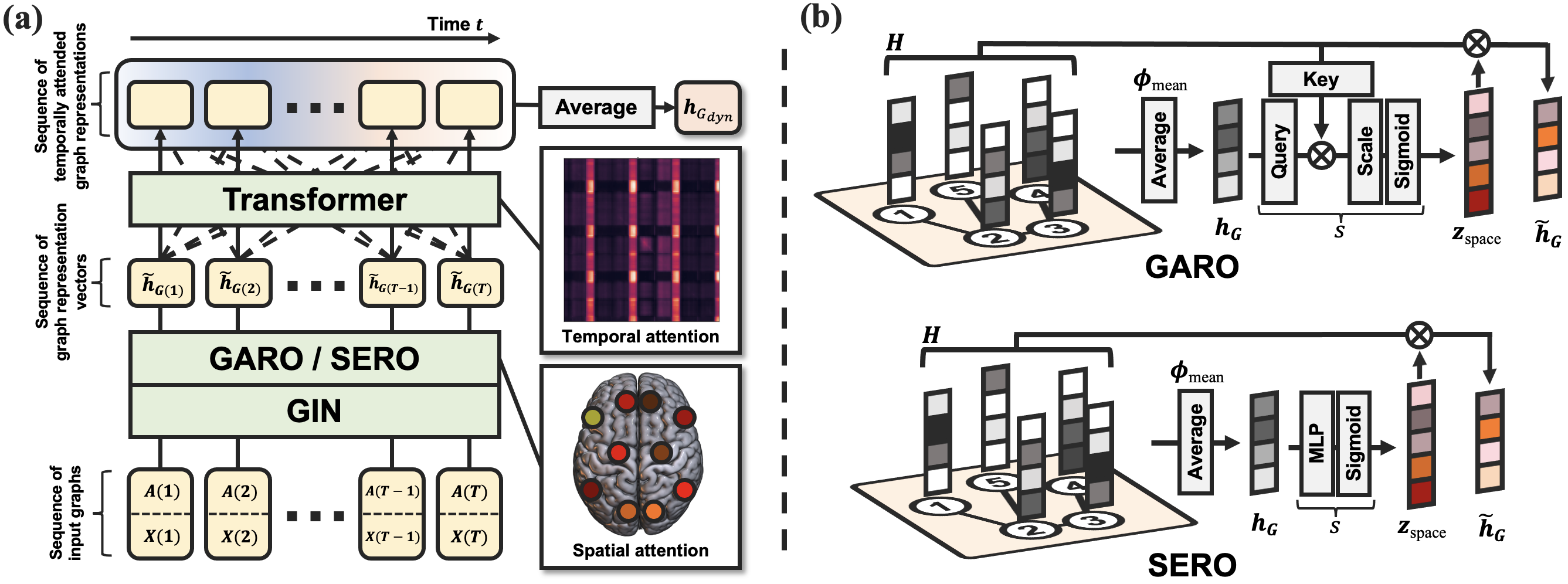}
  \caption{Schematic illustration of the proposed method. (a) Overall framework of the STAGIN. A sequence of dynamic graph is first input to the GIN followed by GARO or SERO which produces a sequence of spatially attended graph representation vectors $\tilde{\hb}_{G(t)}$. Temporal attention is computed over $\tilde{\hb}_{G(t)}$ and the temporally attended graph representations are averaged to generate the final representation $\tilde{\hb}_{G_{\text{dyn}}}$. (b) Attention-based READOUT modules. Both GARO and SERO compute spatial attention $\zb_{\text{space}}$ with global average-pooled graph feature $\hb_{G}$ as prior.}
  \label{fig:scheme}
\end{figure}

In this section, we discuss the details of our main contribution.
Specifically, we propose STAGIN with two novel attention-based READOUT modules for learning the dynamic graph representation of the brain connectome (Figure \ref{fig:scheme}).

\subsection{Dynamic graph definition}\label{sec:graph_definition}
% We first define the dynamic graph of the functional connectome which can serve as an input to the STAGIN.
The sequence of input dynamic FC graphs is constructed from 4D fMRI data with 3D voxels across time.
The ROI-timeseries matrix $\Pb \in \mathbb{R}^{N \times T_{\text{max}}}$ is extracted by taking the mean values within a pre-defined 3D atlas which consists of $N$ ROIs at each timepoint.
Values of each ROI are standardized across time.
Constructing dynamic FC matrix follows the sliding-window approach, where the temporal window of length $\Gamma$ is shifted across time with stride $S$ to generate $T=\lfloor T_{\text{max}}-\Gamma / S \rfloor$ windowed matrices $\bar{\Pb}(t) \in \mathbb{R}^{N \times \Gamma}$ (Figure \ref{fig:dynamic_graph} (a)).
The FC at time $t$ is defined as the correlation coefficient matrix $\Rb(t)$ of the windowed timeseries between $\bar{\pb}_{i}(t)$ and $\bar{\pb}_{j}(t)$:
$$
R_{ij}(t) = \frac{\mathrm{Cov}(\bar{\pb}_{i}(t),\bar{\pb}_{j}(t))}{\sigma_{\bar{\pb}_{i}}(t)\sigma_{\bar{\pb}_{j}}(t)} \in \Rd^{N \times N},
$$
where the subscript $i$ and $j$ are the row and column indices of $\bar{\Pb}(t)$, $\mathrm{Cov}$ denotes the cross covariance, and $\sigma_{\pb}$ denotes the standard deviation of $\pb$.
The final binary adjacency matrix $\Ab(t) \in \{ 0,1 \}^{N \times N}$ is obtained from the FC matrix $\Rb(t)$ by thresholding the top 30-percentile values of the correlation matrix as connected, and otherwise unconnected following \citep{kim2020understanding}.
Other thresholds for binarizing the correlation matrix are also experimented and the results are provided in the Appendix Section \ref{sec:appendix_hyperparam}.
% $$
%   A_{ij}^{t} =
%   \begin{cases}
%     0, & \text{if}\ R_{ij}^{t} < P_{30} \\
%     1, & \text{otherwise}
%   \end{cases}
% $$

% The dynamic fluctuation of the FC can be captured by the above definition of $\Ab (t)$.
Unlike the adjacency matrix $\Ab(t)$, conventional definition of node feature vectors $\xb_{v}(t)$ at node index $v$ as coordinates \citep{li2019graph}, mean-activation \citep{li2019graph,gadgil2020spatio}, or one-hot encoding \citep{kim2020understanding}, do not  change over $t$, disregarding any temporal variation.
To address this issue, we concatenate encoded timestamp $\eta (t) \in \mathbb{R}^{D}$ to the spatial one-hot encoding $\eb_v$, followed by linear mapping with a learnable parameter matrix $\Wb \in \mathbb{R}^{D \times (N+D)}$ to define the input node feature,
\begin{align}
  \xb_{v}(t) = \Wb [\eb_v || \eta(t)].
\end{align}
Here, the learnable timestamp encoder $\eta$ is a Gated Recurrent Unit (GRU) \citep{chung2014empirical} which takes ROI-timeseries upto the endpoint of the sliding-window as the input.
Both the vertex set $V(t)$ and the edge set $E(t)$ of graph $G(t)$ now incorporates temporal information at time $t$.
% It should be noted that our final definition of $\xb_{v}(t)$ is inspired by earlier studies on dynamic GNN \citep{xu2020inductive,rossi2020temporal}, particularly for concatenating the encoded temporal information to the original node features.
See Figure \ref{fig:dynamic_graph} for an illustration of the dynamic graph definition.

\begin{figure}
  \centering
  \includegraphics[width=1.0\textwidth]{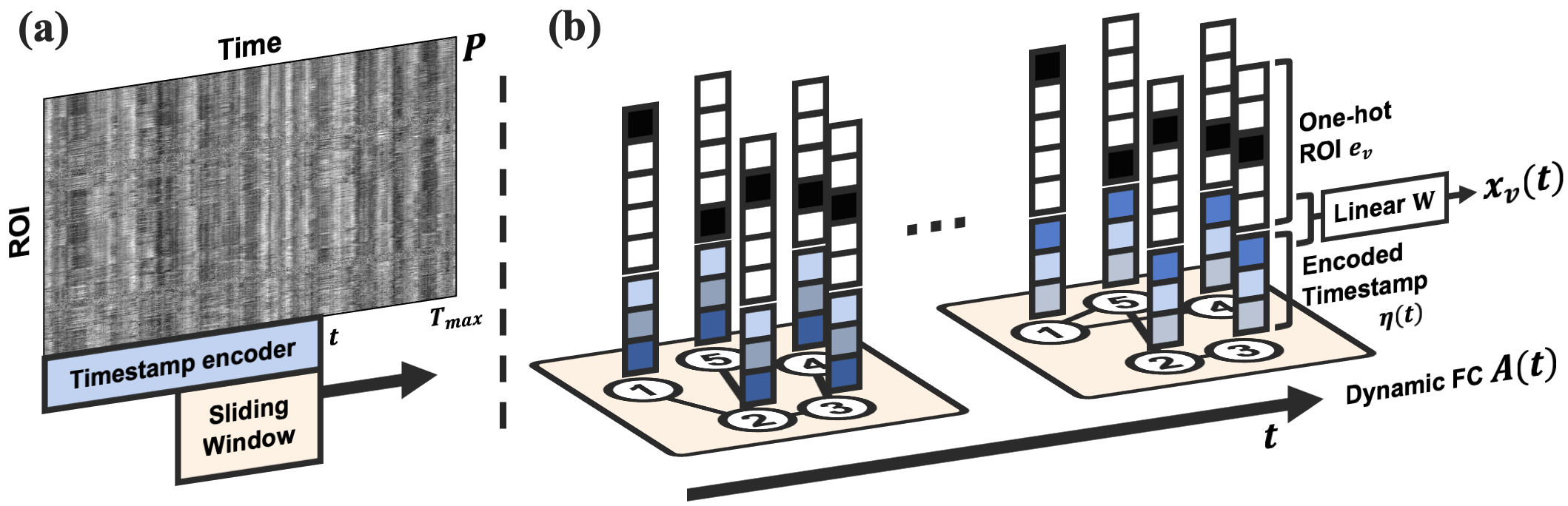}
  \caption{Defining the dynamic graph.
  (a) Scheme of extracting dynamic graph from ROI-timeseries matrix $\Pb$.
  (b) Example of a constructed dynamic graph.
  }
  \label{fig:dynamic_graph}
\end{figure}

\subsection{Spatial attention with attention-based READOUT} \label{sec:spatial_attention}
As suggested from Proposition \ref{prop:constant}, conventional READOUT function of GNN can be thought of as a fixed decoder that decodes whole-graph feature from the node features with no learnable parameters.
We address this issue by incorporating attention to the READOUT function, which the attention here refers to the scaling coefficient across the nodes learned by the model.
% In this section, we discuss how to incorporate spatial attention to the READOUT function to achieve better expressivity of the GNN while obtaining spatial explainability at the same time.
% decoder being a constant function can restrict the expressivity of the neural network, and explore adaptive READOUT functions with attention in Section \ref{sec:spatial_attention}.
% Spatial attention refers to the scaling coefficient across the nodes learned by the model.
% From the empirical knowledge that the node feature vectors $\hb_{v}$ suffices to represent the whole graph feature $\hb_{G}$ as in $\eqref{eq:readout_mean}$, the spatial attention vector $\zb_{\text{space}}(t) \in [0,1]^{N}$ is computed by taking the $\Hb$ and/or $\hb_{G}$ as a prior:
Specifically, the spatial attention vector $\zb_{\text{space}}(t) \in [0,1]^{N}$ is computed by taking the $\Hb$ as a prior:
\begin{align}
  \zb_{\text{space}} &= s(\Hb ), \label{eq:attention_readout_analysis} \\
  % \hb_{G_{t}}^{(k)} &=  \Hb^{(k)} \mathrm{diag}(\zb_{\text{space}}^{(k)}) \phib_{\text{mean}}, \label{eq:attention_readout}
  \tilde{\hb}_{G} &=  \Hb \zb_{\text{space}}, \label{eq:attention_readout_synthesis}
\end{align}
where $s: \mathbb{R}^{D\times N} \rightarrow [0,1]^{N}$ is the attention function and $\tilde{\hb}_{G}$ denotes spatially attended graph representation $\hb_{G}$.
% The notation for timestep $t$ and layer $(k)$ are omitted for simplicity.
We propose two types of attention function $s(\cdot )$ for the attention-based READOUT, named Graph-Attention READOUT (GARO) and Squeeze-Excitation READOUT (SERO) inspired by the attention mechanisms of \citep{vaswani2017attention} and \citep{hu2018squeeze}, respectively.

\subsubsection{GARO: Graph-Attention READOUT}
The GARO follows key-query embedding based attention of the Transformer \citep{vaswani2017attention}.
However, the key embedding is computed from the matrix of node features $\Hb$, while the query embedding is computed from the vector of unattended graph representation $\Hb \phib_{\text{mean}}$:
\begin{align}
    \Kb &= \Wb_{\text{key}}  \Hb, \notag \\
    \qb &= \Wb_{\text{query}}  \Hb \phib_{\text{mean}}, \notag \\
    \zb_{\text{space}} &= \mathrm{sigmoid}\Bigl( \frac{\qb^{\top} \Kb}{\sqrt{D}} \Bigr),
    % \Vb &= \Wb_{V}  \Hb_{G} \notag \\
    % \texttt{ATTENTION}(\Hb_G) &:= \text{softmax}\left( \frac{\Qb \Kb^{(k)\top}}{\sqrt{d}} \right)\Vb \in
\end{align}
where $\Wb_{\text{key}} \in \mathbb{R}^{D \times D}$, $\Wb_{\text{query}} \in \mathbb{R}^{D \times D}$ are learnable key-query parameter matrices, $\Kb \in \mathbb{R}^{D \times N}$ is the embedded key matrix, and $\qb \in \mathbb{R}^{D}$ is the embedded query vector.

\subsubsection{SERO: Squeeze-Excitation READOUT}
The SERO follows MLP based attention of the Squeeze-and-Excitation Networks \citep{hu2018squeeze}.
However, attention from the \textit{squeezed} graph representation does not scale the channel dimension, but the node dimension in SERO:
\begin{align}
  \zb_{\text{space}} &= \mathrm{sigmoid}\Bigl( \Wb_{2} \sigma(\Wb_{1} \Hb \phib_{\text{mean}})  \Bigr), \label{eq:sero}
\end{align}
where $\sigma$ is the nonlinearity function and $\Wb_{1} \in \mathbb{R}^{D \times D}$, $\Wb_{2} \in \mathbb{R}^{N \times D}$ are learnable parameter matrices.
This type of spatial dimension squeeze-excitation module has been shown to improve performance of the CNN models \citep{roy2018recalibrating}, but was not easily applicable to general graphs which may vary in number of nodes for each graph.
We exploit the fact that the brain graphs have fixed number of nodes $N$ across participants based on the chosen atlas.

\subsubsection{Orthogonal regularization}
\label{sec:ortho_reg}

If we take a closer look at \eqref{eq:readout_mean} and \eqref{eq:attention_readout_synthesis}, computation of graph feature vector $\hb_{G}$ from the node feature matrix $\Hb$ can also be viewed as reconstructing signal $\hb_{G}$ from the basis frames $\Hb$ with vectors $\phib_{\text{mean}}$ and $\zb_{\text{space}}$, respectively.
While $\zb_{\text{space}}$ provides further expressivity of the model with adaptive coefficients when compared to $\phib_{\text{mean}}$, we find it desirable to encourage the orthogonality of $\Hb$ as elaborated in the Appendix Section \ref{sec:ortho_geometry}.
The orthogonal regularization $\mathcal{L}_{\text{ortho}}$ is defined as:
\begin{align}\label{eq:ortho_reg}
  \mathcal{L}_{\text{ortho}} = \left\lVert 1/m\cdot \Hb^{\top}\Hb - \Ib \right\rVert_{2},
\end{align}
where $m=\max (\Hb^{\top}\Hb)$.
The scaling term $1/m$ ensures the columns of the matrix $\Hb$ become orthogonal to each other with the same length, while not restricting the specific length that the column vectors should follow.

\subsection{Temporal attention with Transformer encoder} \label{sec:temporal_attention}
For attention across time, we employ a single-headed Transformer encoder \citep{vaswani2017attention} upon the sequence of graph features $(\tilde{\hb}_{G(1)},...,\tilde{\hb}_{G(T)})$.
The temporal attention can be measured by the self-attention weights $\Zb_{\text{time}} \in [0,1]^{T \times T}$ after the softmax function of the Transformer encoder.
% \begin{align}
%   \Kb^{(k)} &= \Wb_{K}^{(k)}\Hb_{G}^{(k)}, \\
%   \Qb^{(k)} &= \Wb_{Q}^{(k)}\Hb_{G}^{(k)},\\
%   \Zb_{\text{time}}^{(k)} &= \mathrm{softmax}\Bigl( \frac{\Qb^{(k)\top} \Kb^{(k)}}{\sqrt{D}} \Bigr) ,
% \end{align}
% where $\Hb_{G}^{(k)} = \begin{bmatrix} \hb_{G_{1}}^{(k)} & \cdots & \hb_{G_{T}}^{(k)} \end{bmatrix} \in \mathbb{R}^{D \times T}$,
Per-layer dynamic graph representation $\hb_{G_{\text{dyn}}}^{(k)}$ is computed by summing the temporally attended feature output from the Transformer encoder across time at each layers, where the final representation:
\begin{align}
  \hb_{G_{\text{dyn}}} = \mathrm{concatenate}( \{ \hb_{G_{\text{dyn}}}^{(k)} \mid k\in\{1,...,K\} \} ) \label{eq:rep_dyn_g}
\end{align}
is the concatenation of dynamic graph representation of all $K$ layers following \citep{xu2018representation}.

%
% The final dynamic graph feature is obtained by averaging the temporally attended graph features across time:
% \begin{align}
%   \Vb^{(k)} &= \Wb_{V}^{(k)}\Hb_{G}^{(k)}, \\
%   \hb_{G}^{(k)} & =  \Zb_{\text{time}}^{(k)} \Vb^{(k)} [1/T,...,1/T]^{\top}.
% \end{align}
%
% \begin{align}
%   \zb_{\text{time}}^{(k)} &= \mathrm{softmax}\Bigl( \frac{\Qb^{(k)\top}\Kb^{(k)}}{\sqrt{D}} \Bigr)
% \end{align}

% \subsection{Training} \label{sec:training}
% % The STAGIN model $f$ is trained end-to-end in a supervised manner with the loss $\Lc$ defined as:
% % Prediction of STAGIN $\hat{\yb} =[\hat{y}[1],...,\hat{y}[C]]$ with $C$ target classes is obtained by linear mapping of the concatenated dynamic graph representation $\hb_{G_{\text{dyn}}}$ in \eqref{eq:rep_dyn_g}.
% % Cross-entropy loss $\Lc_{\text{xent}}$ between the prediction and the ground truth $\yb = [y[1],..., y[C]]$ is defined as:
% % \begin{align}\label{eq:xent}
% %   \Lc_{\text{xent}} = -\mathbb{E} \left[\sum^{C}_{i=1} y[i]\cdot \log (\hat{y}[i])  \right],
% % \end{align}
% % where expectation of \eqref{eq:xent} is computed over the training dataset.
% % The final loss $\Lc$:
% % \begin{align}
% %   \Lc = \Lc_{\text{xent}} + \lambda \cdot \Lc_{\text{ortho}},
% % \end{align}
% % is minimized where $\lambda$ denotes the scaling coefficient of the orthogonal regularization.
%
% The STAGIN $f$ is trained end-to-end in a supervised manner with the loss $\Lc = \Lc_{\text{xent}} + \lambda \cdot \Lc_{\text{ortho}}$ where $\Lc_{\text{xent}}$ is the cross entropy loss and $\lambda$ is the scaling coefficient of the orthogonal regularization.

\section{Experiment} \label{sec:experiments}

\subsection{Dataset} \label{sec:dataset}

Publicly available\footnote{\url{https://db.humanconnectome.org}} fMRI data from the HCP S1200 release \citep{van2013wu} was used for our experiments.
The data was collected from voluntary participants with informed consent and was fully anonymized.
We constructed two datasets, the HCP-Rest and the HCP-Task, depending on whether the subject was resting or performing specific tasks during the acquisition of the image.
The HCP-Rest dataset consisted of pre-processed and ICA denoised resting-state fMRI data \citep{glasser2013minimal}, which the subjects were instructed to rest
% with eyes open while fixating on a crosshair
for 15 minutes during the data acquisition.
We used first run data of the four sessions, and excluded data with short acquisition time with $T_{\text{max}}<1200$.
There were 1093 images finally included in the dataset, which consisted of 594 female and 499 male subjects.
The gender of each subject served as the labels of the HCP-Rest dataset letting the number of classes $C=2$.
The HCP-Task consisted of pre-processed task fMRI data \citep{glasser2013minimal}, which the subjects were instructed to perform specific tasks during data acquisition.
For example in the "Motor" task fMRI, participants were told to perform one of the subtasks during the acquisition to make motor movements on one's left hand, left foot, right hand, right foot, or tongue.
There were seven types of tasks including working memory, social, relational, motor, language, gambling, and emotion.
After excluding the fMRI data with short acquisition time, there were 7450 images included in the dataset.
The task type during the data acquisition served as the labels of the HCP-Task dataset, letting $C=7$.
A more detailed description of the experiment datasets with a note on the twin subjects of HCP can be found in the Appendix Section \ref{sec:appendix_dataset}.
% Detailed description of the fMRI pre-processing pipeline is referred to \citep{glasser2013minimal}.

\subsection{Experimental settings} \label{sec:settings}

\begin{table}
% \begin{wraptable}{r}{0.75\textwidth}
  \caption{Comparative study on HCP-Rest and HCP-Task dataset.}
  \label{table:acc}
  \centering
  \begin{tabular}{cccccc}
    \toprule
    \multirow{2}{*}{Model}     & \multicolumn{2}{c}{HCP-Rest} & HCP-Task & \multirow{2}{*}{Type of FC} & \multirow{2}{*}{\# Params} \\
    \cmidrule(r){2-3}\cmidrule(r){4-4}
      &   Accuracy  (\%)   & AUROC &    Accuracy  (\%) &  &   \\
    \midrule
    STAGIN-SERO   & \textbf{88.20} $\pm$ 1.33  & \textbf{0.9296}  $\pm$ 0.0187 & \textbf{99.19} $\pm$ 0.20 & Dynamic & 1,209k  \\
    STAGIN-GARO   & 87.01 $\pm$ 3.00  & 0.9151  $\pm$ 0.0258 &  \textbf{99.02} $\pm$ 0.17 & Dynamic & 1,068k  \\
    ST-GCN \citep{gadgil2020spatio} & 76.95 $\pm$ 3.00  & 0.8545  $\pm$ 0.0316 & 98.92 $\pm$ 0.27 & Dynamic & 355k\\
    MS-G3D \citep{dahan2021improving} & 79.16 $\pm$ 2.53  & 0.8912  $\pm$ 0.0329 & - & Dynamic & 3,045k\\
    BAnD++ \citep{nguyen2020attend} & - & -  & 97.20 $\pm$ 0.57 & None & 2,010k  \\
    BAnD   \citep{nguyen2020attend} & - & -  & 95.10 $\pm$ 0.62 & None & 2,010k  \\
    r-BAnD    & - & -  & 98.90 $\pm$ 0.27 & Dynamic & 664k  \\
    GIN \citep{kim2020understanding}  & 81.34 $\pm$ 2.40 & 0.8955 $\pm$ 0.0237 & 93.87 $\pm$ 0.66 & Static & 169k \\
    GCN \citep{kipf2016semi} & 80.79 $\pm$ 2.00 & 0.8741 $\pm$ 0.0174 & 45.07 $\pm$ 1.63 & Static & 101k \\
    GraphSAGE \citep{li2020pooling} & 75.48 $\pm$ 1.97 & 0.8237 $\pm$ 0.0228 & 54.52 $\pm$ 0.97 & Static& 202k \\
    ChebGCN \citep{arslan2018graph} & 77.76 $\pm$ 2.09 & 0.8582 $\pm$ 0.0233 & 73.06 $\pm$ 0.68 & Static & 704k \\
    \bottomrule
  \end{tabular}
% \end{wraptable}
\end{table}

Experiments were performed on a workstation with two NVIDIA GeForce GTX 1080 Ti GPUs.
The STAGIN model $f$ is trained end-to-end in a supervised manner with the loss $\Lc = \Lc_{\text{xent}} + \lambda \cdot \Lc_{\text{ortho}}$ where $\Lc_{\text{xent}}$ is the cross entropy loss and $\lambda$ is the scaling coefficient of the orthogonal regularization.
We set the number of layers $K=4$, embedding dimension $D=128$, window length $\Gamma=50$, window stride $S=3$, and regularization coefficient $\lambda=1.0\times 10^{-5}$.
The window length and stride correspond to capturing the FC within 36 seconds every 2.16 seconds, which follows the standard setting of the sliding-window dFC analyses \citep{zalesky2015towards,preti2017dynamic}.
% The Transformer encoder for temporal attention is single-headed.
Dropout rate 0.5 is applied to the final dynamic graph representation $\hb_{G_{\text{dyn}}}$, and rate 0.1 is applied to the attention vectors $\zb_{\text{space}}$ and $\zb_{\text{time}}$ during training.
For nonlinearity $\sigma$ in \eqref{eq:gin_matrix} and \eqref{eq:sero}, GELU \citep{hendrycks2016gaussian} is used instead of ReLU with batch normalization before each $\sigma$.
One-cycle learning rate policy is employed, which the learning rate is gradually increased from $0.0005$ to $0.001$ during the early 20\% of the training, and gradually decreased to $5.0 \times 10^{-7}$ afterwise.
Thirty training epochs were run for the HCP-Rest dataset with minibatch size 3, while ten epochs were run with minibatch size 16 for the HCP-Task dataset.
We performed 5-fold stratified cross-validation of the dynamic graphs from the dataset, and report mean and standard deviation across the folds.
To extract the ROI-timeseries, the Schaefer atlas \citep{schaefer2017local} with 400 regions ($N=400$) labelled with 7 intrinsic connectivity networks (ICNs) was used.
The time dimension of ROI-timeseries matrix $\Pb$ was randomly sliced with a fixed length (600 for HCP-Rest, 150 for HCP-Task) at each steps during training for (i) relieving computational overload, (ii) stochastic augmentation of the training dataset, (iii) mitigating unwanted memorization of the specific timing of subtask onset, and (iv) matching the number of timepoints $T$ across different task labels for the HCP-Task dataset.
Unsliced full matrix $\Pb$ was used for inference at test time.
The end-to-end inference from the construction of the dynamic graph to the acquisition of the final prediction required 1.68 seconds per sample with given experimental settings.

\subsection{HCP-Rest: Gender classification}\label{sec:experiment_rest}

% \begin{table}
%   \caption{Comparative study on HCP-Rest and HCP-Task dataset. - is short for Not Applicable.}
%   \label{table:acc}
%   \centering
%   \begin{tabular}{ccccc}
%     \toprule
%     \multirow{2}{*}{Model}     & \multicolumn{2}{c}{HCP-Rest} & \multicolumn{2}{c}{HCP-Task} \\
%     \cmidrule(r){2-3}\cmidrule(r){4-5}
%       &   Accuracy  (\%)   & AUROC &    Accuracy  (\%)   & AUROC  \\
%     \midrule
%     STAGIN-SERO   & \textbf{88.20} $\pm$ 1.33  & \textbf{0.9296}  $\pm$ 0.0187 & \textbf{99.19} $\pm$ 0.20  & \textbf{0.9997}  $\pm$ 0.0002 \\
%     STAGIN-GARO   & 87.01 $\pm$ 3.00  & 0.9151  $\pm$ 0.0258 &  \textbf{99.02} $\pm$ 0.17  & 0.9995  $\pm$ 0.0002\\
%     ST-GCN \citep{gadgil2020spatio} & 76.95 $\pm$ 3.00  & 0.8545  $\pm$ 0.0316 & - & - \\
%     BAnD++ \citep{nguyen2020attend} & - & - & 97.20 $\pm$ 0.57   &  -    \\
%     BAnD   \citep{nguyen2020attend} & - & - & 95.10 $\pm$ 0.62   &  -    \\
%     \bottomrule
%   \end{tabular}
% \end{table}

We first validate our proposed method by gender classification on the HCP-Rest dataset.
% The gender classification on the HCP resting-state fMRI data is one of the most widely adopted experiment for validating GNN based methods applied to FC networks \citep{}.
The two proposed methods, named STAGIN-GARO and STAGIN-SERO based on the type of the spatial attention module, resulted in 87.01\% and 88.20\% mean accuracy on the 5-fold cross validation, respectively (Table \ref{table:acc}).
The mean area under receiver operator characteristic curve (AUROC) were 0.9151 and 0.9296.
% It can be said that Both STAGIN-SERO and STAGIN-GARO achieve state-of-the-art results on gender classification task, considering that previous GNN methods on HCP-Rest report around 80.0\% accuracy for both static and dynamic FC cases.
% Specifically,
Classification performance of STAGIN is compared with other GNN methods for reprensentation learning of dynamic/static FC network, including ST-GCN \citep{gadgil2020spatio}, MS-G3D \citep{dahan2021improving}, GIN \citep{kim2020understanding}, GCN \citep{kipf2016semi}, GraphSAGE \citep{li2020pooling}, and ChebGCN \citep{arslan2018graph}.
We used the code by the authors of \citep{gadgil2020spatio}\footnote{\url{https://github.com/sgadgil6/cnslab_fmri}} and \citep{dahan2021improving}\footnote{\url{https://github.com/metrics-lab/ST-fMRI}} but modified the cross validation scheme to avoid early stopping based on the test dataset for fair comparison.
It can be seen from Table \ref{table:acc} that our proposed method outperforms other GNN based methods.
% The ST-GCN resulted in 76.95\% accuracy and 0.8545 AUROC for gender classification on the HCP-Rest dataset.
The results of the ablation study are shown in Table \ref{table:ablation} in the Appendix.
% The performance of ST-GCN being lower than expected is probably related to the exclusion of early stopping on the test dataset, which the mean accuracy was replicated to be 83.70\% when early stoppped with the test dataset.

We use STAGIN-SERO, which showed the best accuracy, for analyzing temporal and spatial attention of the dynamic FC networks.
% Exemplar plots of the learned attention $\Zb_{\text{time}}^{(k)}$ suggested non-informative signal along the queries (Appendix Figure \ref{fig:exemplar_rest_time}).
We define the temporal attention vector $\zb_{\text{time}}^{(k)} \in [0,1]^{T}$ at layer $k$ as the average of row elements in the self-attention weight matrix $\zb_{\text{time}}^{(k)}[j] = \frac{1}{T} \sum_{i=1}^{T} Z_{ij}$ where $\zb_{\text{time}}^{(k)}[j]$ and $Z_{ij}$ are $j$-th element of $\zb_{\text{time}}^{(k)}$ and $(i,j)$-th element of $\Zb_{\text{time}}^{(k)}$ for the resting-state data, respectively.
% For the resting-state fMRI data, it was found that non-informative flat attention signal is carried along the query axis (columns of $\Zb_{\text{time}}$) while the key axis (rows of $\Zb_{\text{time}}$) show a pattern of sparse peaks, which is similar to the findings of \citep{nguyen2020attend} (see Appendix \ref{sec:exemplar_rest} for exemplar plot).
% Thus, the temporal attention vector $\zb_{\text{time}}^{(k)} \in [0,1]^{T}$ at layer $k$ is defined as the average of query axis in the self-attention weight matrix $\zb_{\text{time}}^{(k)} = \frac{1}{T} \sum_{i=1}^{T} Z_{ij}$ where $Z_{ij}$ is the $(i,j)$-th element of $\Zb_{\text{time}}^{(k)}$.
% To characterize the dynamic features of the resting-state FC network between the two genders from the trained model, we employ k-means clustering analysis with 7 clusters inspired by \citep{allen2014tracking}.
To employ k-means clustering to the resting-state dynamic FC analysis \citep{allen2014tracking}, we first define a set of \emph{attended} timepoints $\tilde{T} = \{t \mid \zb_{\text{time}}[t] > \alpha \cdot \sigma_{\zb_{\text{time}}}\} $ where $\alpha$ is the cutoff coefficient, and $\sigma_{\zb_{\text{time}}^{(k)}}$ denotes the standard deviation of $\zb_{\text{time}}^{(k)}$.
% We denote the set of attended timepoints as $\tilde{T} = \{t \mid \zb_{\text{time}}[t] > \alpha \cdot \sigma_{\zb_{\text{time}}}\} $.
Defining the threshold based on standard deviation inherits the practice of the point-process analysis for dynamic FC, so
we set $\alpha=1.0$ following \citep{tagliazucchi2012criticality}.
% The cutoff coefficient $\alpha=1.0$ in our experiments following \citep{tagliazucchi2012criticality}.
% ***Exemplar plot of $\Zb_{\text{time}}^{(k)}$ and the attention threshold $\alpha \cdot \sigma_{\zb_{\text{time}}^{(k)}}$ are provided in Figure \ref{fig:exemplar_rest_time} of Appendix \ref{sec:exemplar_rest}.
Pattern of the FC matrices at attended timepoints $A^{\tilde{T}} = \{ \Ab(t) \mid t\in \tilde{T} \}$ for each subject can now be analyzed with the k-means clustering.
Specifically, we fit 7 template cluster centroids from the dynamic FC matrices $\Ab(t)$ over all subjects, and assign elements of $A^{\tilde{T}}$ into one of the 7 template clusters.
The ratio of each clusters from $A^{\tilde{T}}$ with respect to $\Ab$ can then be analyzed with the subset of $A^{\tilde{T}}$ including only the female or male subjects.
% To reduce the computational cost, the dynamic FC matrices are vectorized including only the upper triangular elements and $50\times$ uniform subsampled across time.
% We denote this set of subsampled, vectorized FC matrices as $A^{\text{sub}}=\{ \mathrm{vec}(\mathrm{triu}( \Ab^{t}_{p})) \mid t \in \{1, 50, ..., T-50, T \}, p \in \{1,2,...,1093\} \}$, where $\mathrm{vec}(\cdot)$ is the vectorization function and $\mathrm{triu}(\cdot)$ indicates function for taking upper-triangular elements over the main diagonal.
% To reduce the computational cost, the dynamic FC matrices are $50\times$ uniform subsampled across time.
% We denote this set of subsampled FC matrices as $A^{\text{sub}}=\{ \Ab^{t}_{p} \mid t \in \{1, 50, ..., T-50, T \}, p \in \{1,2,...,1093\} \}$.
% The cluster centroids are optimized with subsampled FC matrices $A^{\text{sub}}$, and elements from the set of attended timepoints $A^{\tilde{T}}$ are then clustered into one of the seven centroids.

% Our assumption was that the proportion of the seven clusters assigned to the attended timepoints would differ between the two genders.
% In particular, we focused on two ICNs of the brain, the default mode network (DMN) and the somatomotor network (SMN).
Evidences from large scale studies suggest that female subjects show hyperconnectivity of the DMN \citep{mak2017default,ritchie2018sex} and hypoconnectivity of the SMN when compared to male subjects \citep{ritchie2018sex,filippi2013organization}.
We accordingly hypothesized that the FC at attended timepoints will show higher values for the DMN and lower values for the SMN in female participants.
% The proportion of each cluster was computed within the subset of attended FC matrices $A^{\tilde{T}}_{p}$ including only female subjects, subset of $A^{\tilde{T}}_{p}$ including only male subjects, and the uniformly subsampled resting-state dynamic FC matrices $A^{\text{sub}}$ from all subjects.
% From the three proportions, we define the dominance of each clusters in female and male subjects by the ratio of the proportion of attended FC matrices with respect to the uniformly subsampled FC matrices.
Figure \ref{fig:rest_kmeans} demonstrates that the clusters mainly attended by female participants show a trend of hyperconnectivity of the DMN and hypoconnectivity of the SMN.
% However, the trend of SMN hypoconnectivity and the female dominance was not apparent as the DMN.
This can be interpreted to mean that the STAGIN is properly trained to take the dynamic state of the FC networks into account for predicting the phenotype of the subject.

\begin{figure}
  \centering
  \includegraphics[width=1.0\textwidth]{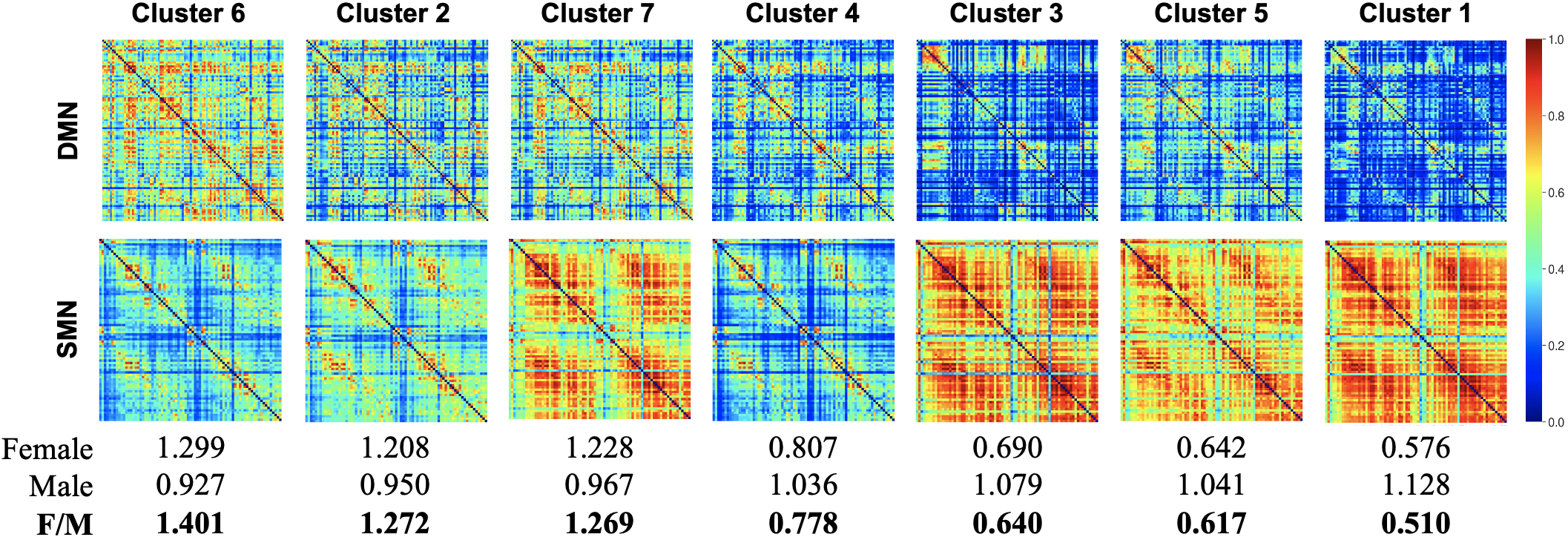}
  \caption{Analysis of temporal attention of the gender classification experiment with k-means clustering.
  The DMN and SMN of the 7 cluster centroids are plotted and the relative proportion of temporally attended clusters for female and male subjects are written below. The clusters are sorted in descending order of the female/male attended cluster ratio.
  % Note that the clusters with high DMN connectivity and low SMN connectivity tend to have high female/male attended cluster ratio.
  }
  \label{fig:rest_kmeans}
\end{figure}

The spatial attention across regions of the brain is analyzed with the $\zb_{\text{space}}^{(k)}$ averaged across time $\tilde{\zb}_{\text{space}}^{(k)} := \frac{1}{T}\sum^{T}_{t=1}\zb_{\text{space}}^{(k)}(t)$.
% $\frac{1}{T}\sum_{t=1}^{T} \zb_{\text{space}}^{(k)}(t)$.
The regions with top 5 percentile attention values of $\tilde{\zb}_{\text{space}}^{(k)}$ are plotted with respect to the seven ICNs in Figure \ref{fig:rest_node_brainplot} in the Appendix.
It can be seen that the majority of the top attended regions are from the SMN, which further suggests gender difference of resting-state FC within the SMN.
A notable limitation here is that the threshold for determining the top attended region is heuristically set.
Statistically determining the spatially attended regions from the resting-state data would further provide validity of the method, which is left as a future work.

% Exemplar plot of $\zb_{\text{space}}^{(k)}(t)$ is provided in Figure \ref{fig:exemplar_rest_node} of Appendix \ref{sec:exemplar_rest}.

% A limitation of the spatial attention analysis for the HCP-Rest data

\subsection{HCP-Task: Task decoding}

\begin{figure}
  \centering
  \includegraphics[width=1.0\textwidth]{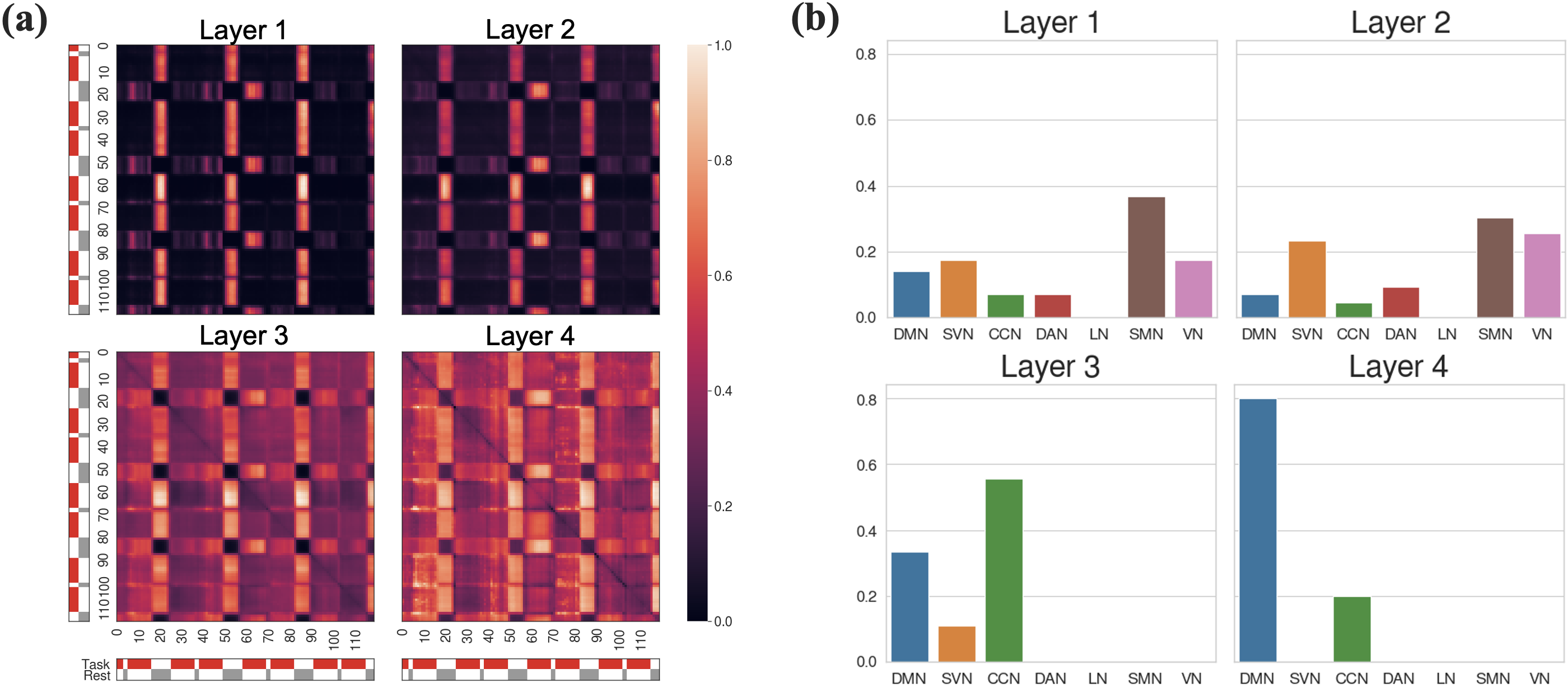}
  \caption{
  Analysis of spatio-temporal attention for working memory task of the task decoding experiment.
  (a) Plot of average temporal attention matrix $\Z_{\text{time}}^{(k)}$ across subjects.
  % Notice that STAGIN inductively learned to attend to the subtask timing without any external guidance.
  (b) Proportion of statistically significant regions within the 7 ICNs from the spatial attention GLM.
  }
  \label{fig:task_attn_wm}
\end{figure}

Task decoding refers to classifying which of the seven tasks the subject was performing during the acquisition of the brain fMRI.
The STAGIN-GARO and STAGIN-SERO showed 99.02\% and 99.19\% mean accuracy for the task decoding experiment, respectively (Table \ref{table:acc}).
It can be seen that the proposed methods outperform the previous state-of-the-art model BAND and BAnD++ \citep{nguyen2020attend}, which applied self-attention of the Transformer encoder directly to 3D ResNet extracted representation vectors of the fMRI without considering the network property of the brain.
To account for the possible statistical disadvantage of voxel-based feature extraction, we further implemented a new region-based BAnD (r-BAnD) by using GIN without attention-based READOUT instaed of the 3D ResNet.
Accuracy of r-BAnD resulted in an accuracy of 98.90\%, suggesting that our method shows superior performance even when the statistical disadvantages are matched.
Experiment on other models including ST-GCN \citep{gadgil2020spatio}, GIN \citep{kim2020understanding}, GCN \citep{kipf2016semi}, GraphSAGE \citep{li2020pooling}, and ChebGCN \citep{arslan2018graph} demonstrate exceptional performance of our proposed method for HCP-Task (Table \ref{table:acc}). The fact that subtask timing information is completely lost may reflect the reason behind poor performance of static FC methods, which can be a critical disadvantage in task classification.

% \begin{table}
%   \caption{Task decoding performance on HCP-Task dataset}
%   \label{table:task-acc}
%   \centering
%   \begin{tabular}{lll}
%     \toprule
%     Model     & Accuracy (\%)   & AUROC \\
%     \midrule
%     BAnD   \citep{nguyen2020attend} & 95.1 $\pm$ 0.62   &  -    \\
%     BAnD++ \citep{nguyen2020attend} & 97.2 $\pm$ 0.57   &  -    \\
%     STAGIN-GARO   & \textbf{99.02} $\pm$ 0.17  & 0.9995  $\pm$ 0.0002  \\
%     STAGIN-SERO   & \textbf{99.19} $\pm$ 0.20  & \textbf{0.9997}  $\pm$ 0.0002 \\
%     \bottomrule
%   \end{tabular}
% \end{table}

We interpret the result from the working memory task for spatio-temporal attention analysis, where the subtask consists of either performing an n-back memory task or rest.
Our key expectation of the temporal attention analysis was that if STAGIN learns to accurately attend to temporal features of the dynamic FC graphs, then $\Zb_{\text{time}}$ should represent which subtask the subject was upto.
% Mean temporal attention $\Zb_{\text{time}}$ of the working memory task across all subjects during the test phase is plotted in Figure \ref{fig:task_attn_wm} (a).
Surprisingly, it can be clearly seen that the Transformer encoder of STAGIN learns to attend to the timing of subtasks from Figure \ref{fig:task_attn_wm} (a), which demonstrates mean temporal attention $\Zb_{\text{time}}$ across all subjects.
Notice that no supervision is provided to the STAGIN model regarding the subtask timing during training.
% *** Discussion on inductive bias?

To analyze the spatially attended regions $\zb_{\text{space}}$ of STAGIN, we construct a GLM \citep{friston1994statistical} to statistically evaluate how much each region is responsible for performing the subtasks.
The parameter vectors $\blmath{\beta}_{\text{task}} \in \mathbb{R}^{N}$ and $\blmath{\beta}_{\text{rest}}\in \mathbb{R}^{N}$ are estimated with the sequence of spatial attention vectors and the subtask timing design matrix $M\in \{0,1 \}^{T \times 2}$ by solving the following with least-squares estimation:
$$
  \begin{bmatrix} \zb_{\text{space}}(0), \cdots , \zb_{\text{space}}(T) \end{bmatrix}^{\top} = \Mb [\blmath{\beta}_{\text{task}}, \blmath{\beta}_{\text{rest}}]^{\top} + \blmath{\epsilon},
$$
where $\blmath{\epsilon}$ denotes residual error.
% Statistical analysis using the GLM is one of the most widely adopted method for task fMRI data analysis \citep{friston1994statistical}.
% Specifically, given a measurement $\Yb$ and the corresponding task design matrix $\Xb$, the parameter vector $\blmath{\beta}$ can represent how much each voxel is responsible for the measurement of each subtasks by solving $\Yb = \Xb \blmath{\beta} + \blmath{\epsilon}$, where $\blmath{\epsilon}$ denotes residual error.
% In this work, we estimate the parameter of $i$-th ROI $\hat{\blmath{\beta}}^{(k)}_{i}$ as the least squares solution of:
% $$
%   \begin{bmatrix} z_{i}^{(k)}(0), \cdots , z_{i}^{(k)}(T) \end{bmatrix}^{\top} = \Mb \blmath{\beta}^{(k)} + \blmath{\epsilon},
% $$
% to measure how much each ROI is responsible for the spatial attention of each subtasks, where $z_{i}$ is the $i$-th element of $\zb_{\text{space}}$ and $\Mb$ is the task design matrix.
% We perform hypothesis testing by weighting the estimated parameter $\hat{\blmath{\beta}}_{i}^ = [\hat{\beta_{i}}^{\text{task}}, \hat{\beta_{i}}^{\text{rest}}]$ with the contrast $\cb = [1, -1]$, so the rejection of null hypothesis indicates  $\hat{\beta}_{\text{task}} > \hat{\beta}_{\text{rest}}$ at the ROI index $i \in \{1,...,N\}$.
% Multiple comparisons of the $N$ ROIs are family-wise error (FWE) corrected.
The contrast of the estimated parameters $\blmath{\hat{\beta}}_{\text{task}}$ and $\blmath{\hat{\beta}}_{\text{rest}}$  was set to $\cb = [1, -1]$ so the rejection of null hypothesis indicates $\blmath{\hat{\beta}}_{\text{task}}[i] > \blmath{\hat{\beta}}_{\text{rest}}[i]$ at the $i$-th ROI.
Multiple comparisons of the $N$ ROIs are family-wise error (FWE) corrected.

Figure \ref{fig:task_attn_wm} (b) shows the proportion of statistically significant regions within the 7 ICNs for each layers.
Interestingly, the layer 1 and 2 share a similar trend that the regions from SMN, visual network (VN), and salience/ventral attention network (SVN) are dominant.
In contrast, layer 3 and 4 suggest a dominance of the regions from DMN and cognitive control network (CCN).
We denote the layer 1 and 2 as the low-order layers (LoL) and the layer 3 and 4 as the high-order layers (HoL).
The dominance of SMN and VN at LoL can be understood as the low-level sensorimotor function for perceiving the task is being processed within the short-range 1- or 2-hop connection of the networks.
On the other hand, the dominance of DMN and CCN at HoL reflects the high-level cognitive integration for executing and controlling the given task being processed within the long-range 3- or 4-hop connection of the networks.
Considering that the SVN is a network for integrating the low-level sensorimotor networks and the high-level executive networks to provide dynamic balancing between the two functions, the significant regions of SVN being present at both LoL and HoL is not surprising.
Temporal and spatial attention plot of other six tasks are further provided in the Appendix Section \ref{sec:appendix_task}.

\begin{ack}
  % This work was supported by the National Research Foundation of Korea (NRF) grant funded by the Korea government (MSIT) (No. NRF-2021M3E5D9025012).
  % This research was funded by the National Research Foundation (NRF) of Korea grant
  % NRF-2020R1A2B5B03001980.
  % This work was supported by Institute of Information \& communications Technology Planning \& Evaluation (IITP) grant funded by the Korea government(MSIT) (No.2019-0-00075, Artificial Intelligence Graduate School Program(KAIST))
  % This research was supported by the KAIST Key Research Institute (Interdisciplinary Research Group) Project.

  This work was supported by the National Research Foundation of Korea (NRF) grant funded by the Korea government (MSIT) (No. NRF-2021M3E5D9025019, NRF-2020R1A2B5B03001980).
  This work was also supported by Institute of Information \& communications Technology Planning \& Evaluation (IITP) grant funded by the Korea government(MSIT) (No.2019-0-00075, Artificial Intelligence Graduate School Program(KAIST)) and the KAIST Key Research Institute (Interdisciplinary Research Group) Project.
% Use unnumbered first level headings for the acknowledgments. All acknowledgments
% go at the end of the paper before the list of references. Moreover, you are required to declare
% funding (financial activities supporting the submitted work) and competing interests (related financial activities outside the submitted work).
% More information about this disclosure can be found at: \url{https://neurips.cc/Conferences/2021/PaperInformation/FundingDisclosure}.
%
% Do {\bf not} include this section in the anonymized submission, only in the final paper. You can use the \texttt{ack} environment provided in the style file to autmoatically hide this section in the anonymized submission.
\end{ack}

% \pagebreak
% \section*{References}
\bibliographystyle{plain}
% \bibliography{reference}

% \small{\bibliography{reference}}
\vfill

\pagebreak
\appendix
% \section*{Appendix}
\section{Geometric interpretation of orthognal regularization}
\label{sec:ortho_geometry}
\begin{figure}[!h]
  \centering
  \includegraphics[width=1.0\textwidth]{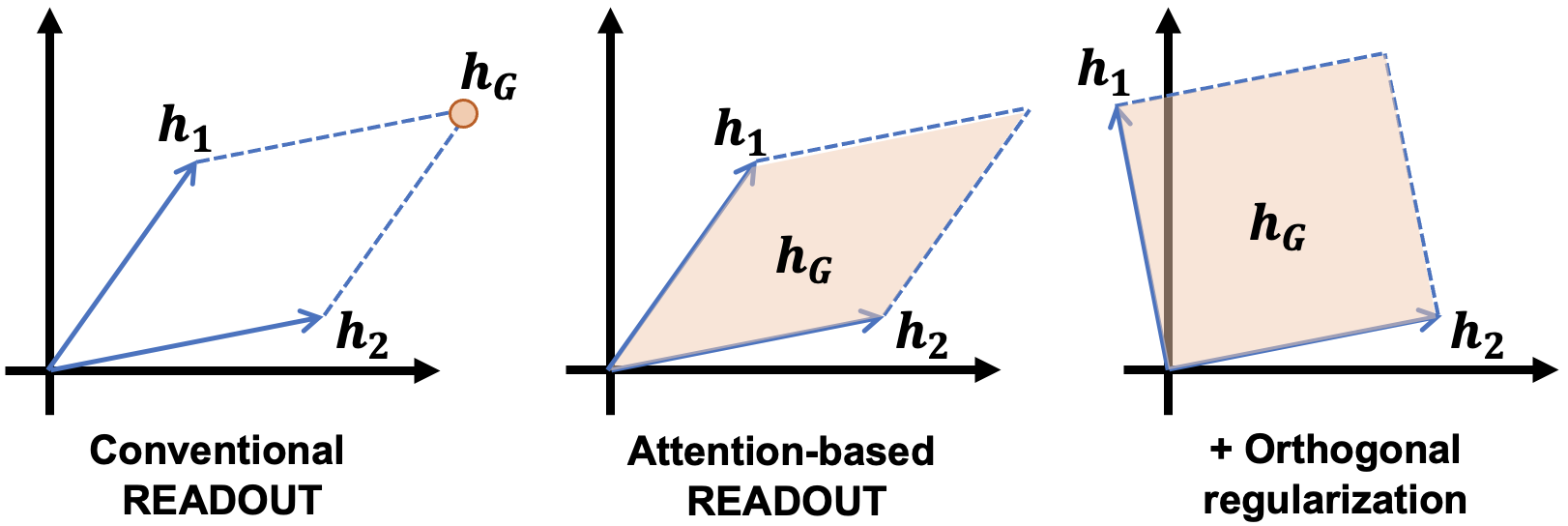}
  \caption{Geometric interpretation of the orthogonal regularization. Blue arrows indicate node feature vectors $\hb_{v}$ of the latent space, and the orange area/point indicate possible range of graph feature vector $\hb_{G}$ obtained by applying READOUT to $\hb_{v}$.}
  \label{fig:ortho_geometry}
\end{figure}

We elaborate our motivation behind orthogonal regularization \eqref{eq:ortho_reg} proposed in Section \ref{sec:ortho_reg}.
The biggest motivation behind orthognoal regularization lies in understanding \eqref{eq:readout_mean} and \eqref{eq:attention_readout_synthesis} that the node features $\Hb$ becomes full rank matrix with good condition number.
Figure \ref{fig:ortho_geometry} visually demonstrates the geometric effect of attention-based READOUT and orthogonal regularization with two example node features $\hb_{1}$ and $\hb_{2}$.
Only one graph feature vector $\hb_{G}$ is possible from the combination of two node features with conventional READOUT, while vectors within the range of the orange rhombus can represent the whole graph feature with attention-based READOUT.
With orthogonal regularization, area of the range that the graph feature vector $\hb_{G}$ can represent become even larger, with lower possibility of null subspace within $\Hb$.
Accordingly, the subspace that $\Hb$ can span can be rich enough.

\section{Detailed description of the dataset}
\label{sec:appendix_dataset}
Detailed description of the experiment datasets are summarized in Table \ref{table:dataset}.
Baseline subtask for serving as the control condition, such as \emph{Rest} or \emph{Response}, are listed as the last item in the Subtasks column.
An important fact about the HCP is that a large number data from twin subjects are included within the dataset.
While this fact has been largely ignored in previous GNN-fMRI studies of gender classification using the HCP dataset, biological influence of shared genetic background on the FC can be quite significant.
We did not take this into account in this work to make a more straightforward comparison with previous methods, but it should be noted as a limitation of this research that requires careful consideration in related future studies.

\begin{table}[!h]
  \caption{Description of the experiment datasets.}
  \label{table:dataset}
  \centering
  \begin{tabular}{llllll}
    \toprule
    Dataset & Task type & Subtasks & No. images  & $T_{\text{max}}$  & $C$ \\
   \midrule
    HCP-Rest           &  Resting-state  &  Rest & 1093 & 1200 & 2 \\
    \cmidrule(lr){1-1}\cmidrule(lr){2-3} \cmidrule(lr){4-4} \cmidrule(lr){5-6}
    \multirow{7}{*}{HCP-Task} & Working Memory & Task, Rest & 1087  & 405 & \multirow{7}{*}{7} \\
    & Social & Mental, Random, Rest & 1053  & 274 & \\
    & Relational & Task, Rest & 1043  & 232 & \\
    & Motor & (L,R).(Hand,Foot), Tongue, Rest & 1085  & 284 & \\
    & Language & Story, Math, Response & 1051  & 316 & \\
    & Gambling & Task, Rest & 1082 & 253 &  \\
    & Emotion & Shape, Face, Rest & 1049  & 176 & \\
    \bottomrule
  \end{tabular}
\end{table}

\section{Additional experiment results}

\subsection{Ablation study}

%
% \begin{figure}[!h]
%   \centering
%   \includegraphics[width=0.8\textwidth]{roc_curve.png}
%   \caption{Receiver operating characteristic curve of STAGIN-SERO for the HCP-Rest test data.}
%   \label{fig:roc_curve}
% \end{figure}

Ablation study results are provided in Table \ref{table:ablation}.
The results suggest that STAGIN shows degraded performance by ablating the orthogonal regularization ($\Lc_{\text{ortho}}$), spatial attention ($\zb_{\text{space}}$), temporal attention ($\Zb_{\text{time}}$), and timestamp encoding $\eta (t)$, confirming the importance of each components of the model.
Gain of classification performance by applying spatio-temporal attention is not as significant as by applying timestamp encoding, but the attention modules are still uncompensable in that they provide neuroscientific explainability of the model.
Extracting the ROI-Timeseries matrix $\Pb$ with other widely used atlases including AAL, Destrieux, and Harvard-oxford are also experimented, and confirmed that the Schaefer atlas with 400 ROIs show best classification performance.
% The classification performance can possibly related to the number of ROIs that the atlas includes, but may require further controlled experiments to confirm this idea.

\begin{table}[!h]
  \caption{Ablation study results.}
  \label{table:ablation}
  \centering
  \begin{tabular}{llllllcc}
    \toprule
     Atlas & $N$& $\mathcal{L}_{\text{ortho}}$ & $\zb_{\text{space}}$ & $\Zb_{\text{time}}$ &  $\eta(t)$ &  Accuracy (\%) & AUROC \\
    \midrule
      \multirow{5}{*}{Schaefer}  & \multirow{5}{*}{400}  & \ding{51}    &     \ding{51}     &  \ding{51}   & \ding{51}    &  88.20 $\pm$ 1.33 & 0.9296  $\pm$ 0.0187 \\
      &  &  \ding{55}    &     \ding{51}     &  \ding{51}   &  \ding{51}   &  87.46 $\pm$ 3.56  &  0.9213 $\pm$  0.0242  \\
      &  &  \ding{55}    &     \ding{55}     &  \ding{51}   &  \ding{51}   &  86.55 $\pm$ 3.12  &  0.9260 $\pm$  0.0216 \\
      &  &  \ding{55}    &     \ding{55}     &  \ding{55}   &  \ding{51}   &  85.64 $\pm$ 2.47  &  0.9272 $\pm$  0.0104 \\
      &  &  \ding{55}    &     \ding{55}     &  \ding{55}   &  \ding{55}   &  82.34 $\pm$ 3.38  &  0.9005 $\pm$  0.0256 \\
        \midrule
      AAL   &  116  &  \ding{51}    &     \ding{51}     &  \ding{51}   &      \ding{51}      & 85.36 $\pm$ 1.58  & 0.9216 $\pm$ 0.0116 \\
      Destrieux   &   150  &  \ding{51}    &     \ding{51}     &  \ding{51}   &      \ding{51}      & 85.73 $\pm$ 1.39  & 0.9235 $\pm$ 0.0126 \\
      Harvard-oxford   &  48  &  \ding{51}    &     \ding{51}     &  \ding{51}   &      \ding{51}      & 82.07 $\pm$ 1.11 &  0.9008 $\pm$  0.0093  \\
    \bottomrule
  \end{tabular}
\end{table}
% \section{Exemplar plot of temporal and spatial attention for HCP-Rest gender classification experiment}
% \label{sec:exemplar_rest}

\subsection{Hyperparameter experiments}
\label{sec:appendix_hyperparam}
Hyperparameter experiment results are provided in Table \ref{table:hyperparam}.
The model tends to be robust to hyperparameter changes, and showed even better HCP-Rest gender classification performance when the edge threshold was set to 40\% instead of 30\% (bold numbers in Table \ref{table:hyperparam}).

\begin{table}[!h]
% \begin{wraptable}{r}{0.75\textwidth}
  \caption{Hyperparameter experiment results.}
  \label{table:hyperparam}
  \centering
  \begin{tabular}{llcc}
    \toprule
    \multicolumn{2}{c}{Hyperparameter}  &   Accuracy  (\%)   & AUROC  \\
    \midrule
    \multirow{3}{*}{Edge threshold} & 20\% & 88.01 $\pm$ 2.81 & 0.9304 $\pm$ 0.0220 \\
    & *30\%  & 88.20 $\pm$ 1.33 & 0.9296 $\pm$ 0.0187 \\
    & 40\% & \textbf{89.02} $\pm$ \textbf{1.80} & \textbf{0.9408} $\pm$ \textbf{0.0110} \\
    \midrule
    \multirow{3}{*}{$\Gamma$} & 25 (18s) & 85.45 $\pm$ 3.51 & 0.9252 $\pm$ 0.0235 \\
    & * 50 (36s) & 88.20 $\pm$ 1.33 & 0.9296 $\pm$ 0.0187 \\
    & 75 (54s) & 86.37 $\pm$ 1.87 & 0.9218 $\pm$ 0.0168 \\
    \midrule
    \multirow{3}{*}{$\lambda$} & $1.0 \times 10^{-4}$ & 87.46 $\pm$ 2.56 & 0.9336 $\pm$ 0.0179 \\
    & *$1.0 \times 10^{-5}$ & 88.20 $\pm$ 1.33 & 0.9296 $\pm$ 0.0187 \\
    & $1.0 \times 10^{-6}$ & 88.10 $\pm$ 2.08 & 0.9347 $\pm$ 0.0194 \\
    \bottomrule
    \multicolumn{4}{l}{* Asterisks indicate baseline experiment settings}
  \end{tabular}
% \end{wraptable}
\end{table}

\subsection{Comparative experiment of spatial attention scoring}

While the motivation may have been different, our attention-based READOUT functions share methodological similarity with graph pooling methods, which score and rank each nodes within the graph for the selection of important nodes.
We experimented on replacing our attention-based READOUT functions with some well-known graph pooling methods  including TopKPooling \citep{gao2019graph}, SAGPooling \citep{lee2019self}, ASAPooling \citep{ranjan2020asap} from the PyTorch Geometric\footnote{\url{https://pytorch-geometric.readthedocs.io/}} package \citep{fey2019fast} without dropping any vertices for scoring the level of attention across the nodes. The results suggest that our attention-based READOUT functions perform better and more stable, with lower computational overload for our graph classification task.

\begin{table}[!h]
% \begin{wraptable}{r}{0.75\textwidth}
  \caption{Comparison with pooling methods for scoring spatial attention.}
  \label{table:score}
  \centering
  \begin{tabular}{ccc}
    \toprule
    Module  &   Accuracy  (\%)   & AUROC  \\
    \midrule
    SERO  (Ours)      & 88.20 $\pm$ 1.33  & 0.9296 $\pm$ 0.0187 \\
    TopKPooling \citep{gao2019graph} & 77.02 $\pm$ 10.94  & 0.8203 $\pm$ 0.1123 \\
    SAGPooling  \citep{lee2019self} & OOM  & OOM \\
    ASAPooling  \citep{ranjan2020asap} & OOM  & OOM \\
    \bottomrule
  \end{tabular}
% \end{wraptable}
\end{table}

We believe that the strength of our attention-based READOUT comes from taking the globally pooled graph feature $\Hb \Phi_{\text{mean}}$ as a prior, which may represent the whole graph property better than a randomly initialized learnable vector (TopKPooling) or GNN aggregated close neighborhood information (SAGPooling, ASAPooling).

\section{Additional attention analysis results}
\subsection{Temporal attention of HCP-Rest}
Analysis of the HCP-Rest temporal attention of are further analyzed with (i) varying number of clusters for k-means clustering, (ii) comparing with unattended average FC pattern in female and male subjects, and (iii) statistical testing of cluster-by-gender attending frequency.

\begin{figure}[!h]
  \centering
  \includegraphics[width=1.0\textwidth]{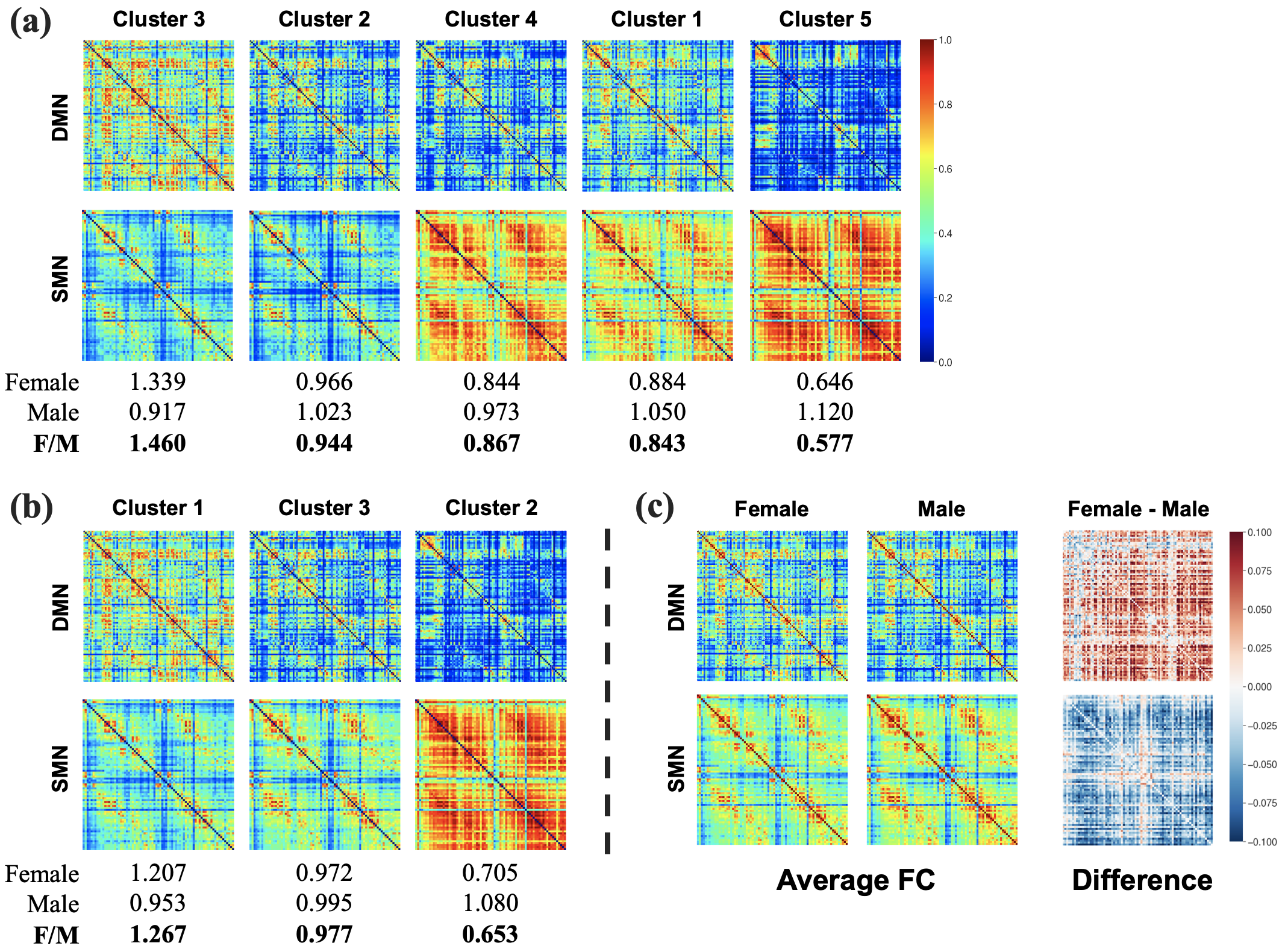}
  \caption{Clustering analysis of HCP-Rest temporally attended regions with (a) number of clusters set to 5, and (b) number of clusters set to 3. (c) Plot of the unattended average FC matrix for female and male subjects. Female subjects show slight hyperconnectivity in the DMN and hypoconnectivity in the SMN when compared to male subjects.}
  \label{fig:rest_kmeans_additional}
\end{figure}

% \begin{table}[!h]
\begin{wraptable}{r}{0.4\textwidth}
% \begin{wraptable}{r}{0.75\textwidth}
  \caption{Chi-square test of temporal attending frequency}
  \label{table:chisquare}
  \centering
  \begin{tabular}{ccc}
    \toprule
    Layer &   $\chi^{2}$   & $p$  \\
    \midrule
    1 & 668.583 & <0.001 \\
    2 & 649.589 & <0.001 \\
    3 & 433.615 & <0.001 \\
    4 & 420.542 & <0.001 \\
    \bottomrule
  \end{tabular}
\end{wraptable}
% \end{table}

Figure \ref{fig:rest_kmeans_additional} (a) and (b) demonstrate the clustering analysis result with number of cluster centroids set to 5 and 3, respectively.
It can be seen that the same pattern of DMN hyperconnectivity and SMN hypoconnectivity is found irrespective of the number of clusters.
Figure \ref{fig:rest_kmeans_additional} (c) show a plot of average DMN and SMN connectivity in female and male subjects, which have minimal difference between the two genders.
When the difference is computed by subtracting average FC matrix of female subjects by that of male subjects, a slight hyperconnectivity in DMN and hypoconnectivity in SMN is present in the average pattern.
This average pattern again confirms the validity of our method by showing that our method can capture the small difference between the two groups that is present in the dynamic FC graph, and exploit the captured information for classification.
Chi-square test on the difference of attending frequency between the cluster-by-gender resulted in that the frequency of attended clusters are significantly different between female and male subjects (Table \ref{table:chisquare}).

% \section{Additional attention analysis results}
\subsection{Temporal and spatial attention analysis of all task types from HCP-Task}
\label{sec:appendix_task}

Temporal (Figure \ref{fig:task_attn}) and spatial (Figure \ref{fig:task_node_barplot}) attention analysis results of task types other than working memory are provided in this section.
It can be seen from Figure \ref{fig:task_attn} that the Transformer encoder of STAGIN learns to temporally attend to the subtasks regardless of the task type, without any subtask timing information provided during training.

\begin{figure}[!h]
  \centering
  \includegraphics[width=1.0\textwidth]{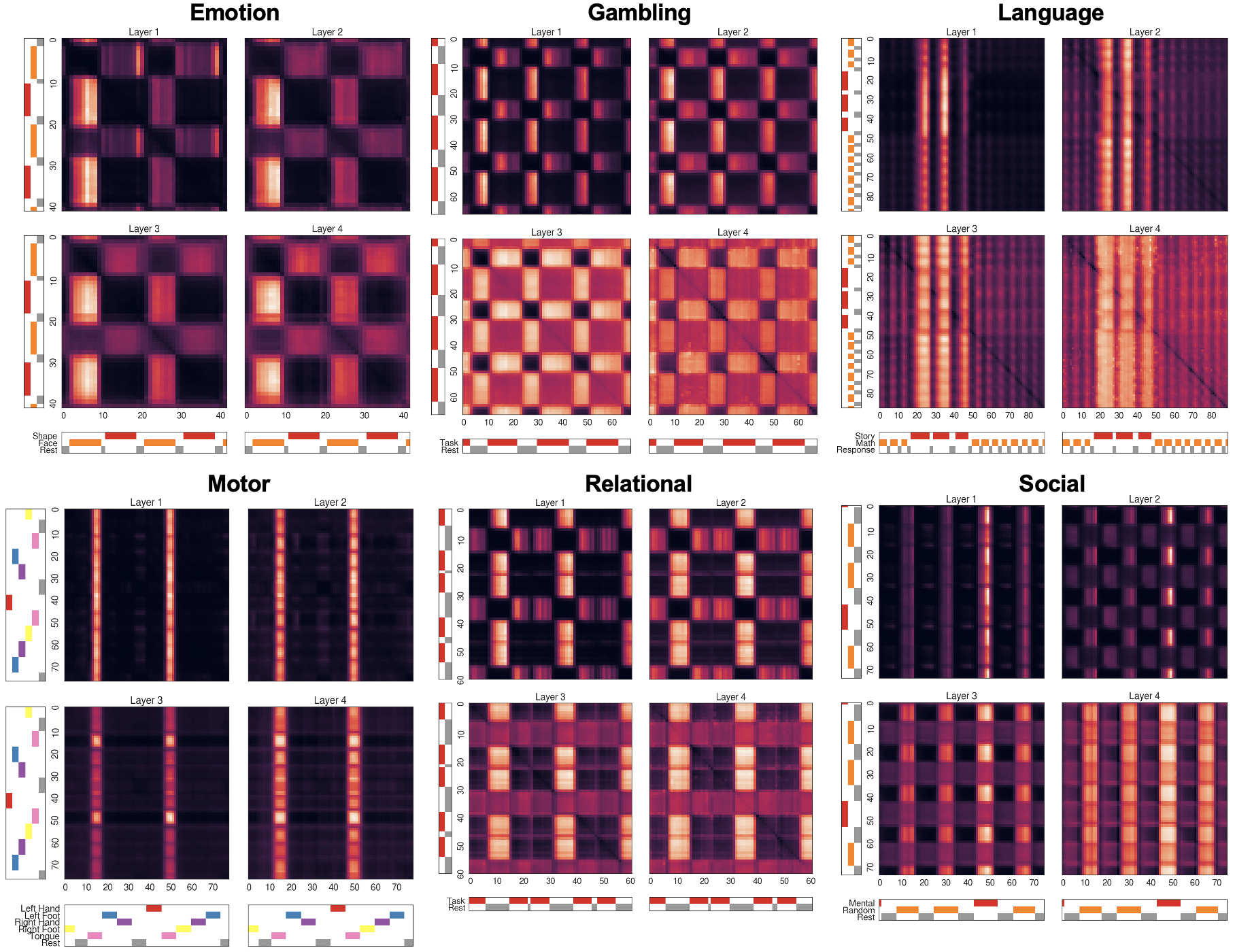}
  \caption{Plot of the HCP-Task temporal attention $\Zb_{\text{time}}^{(k)}$ averaged across all subjects.}
  \label{fig:task_attn}
\end{figure}

\begin{figure}[!h]
  \centering
  \includegraphics[width=1.0\textwidth]{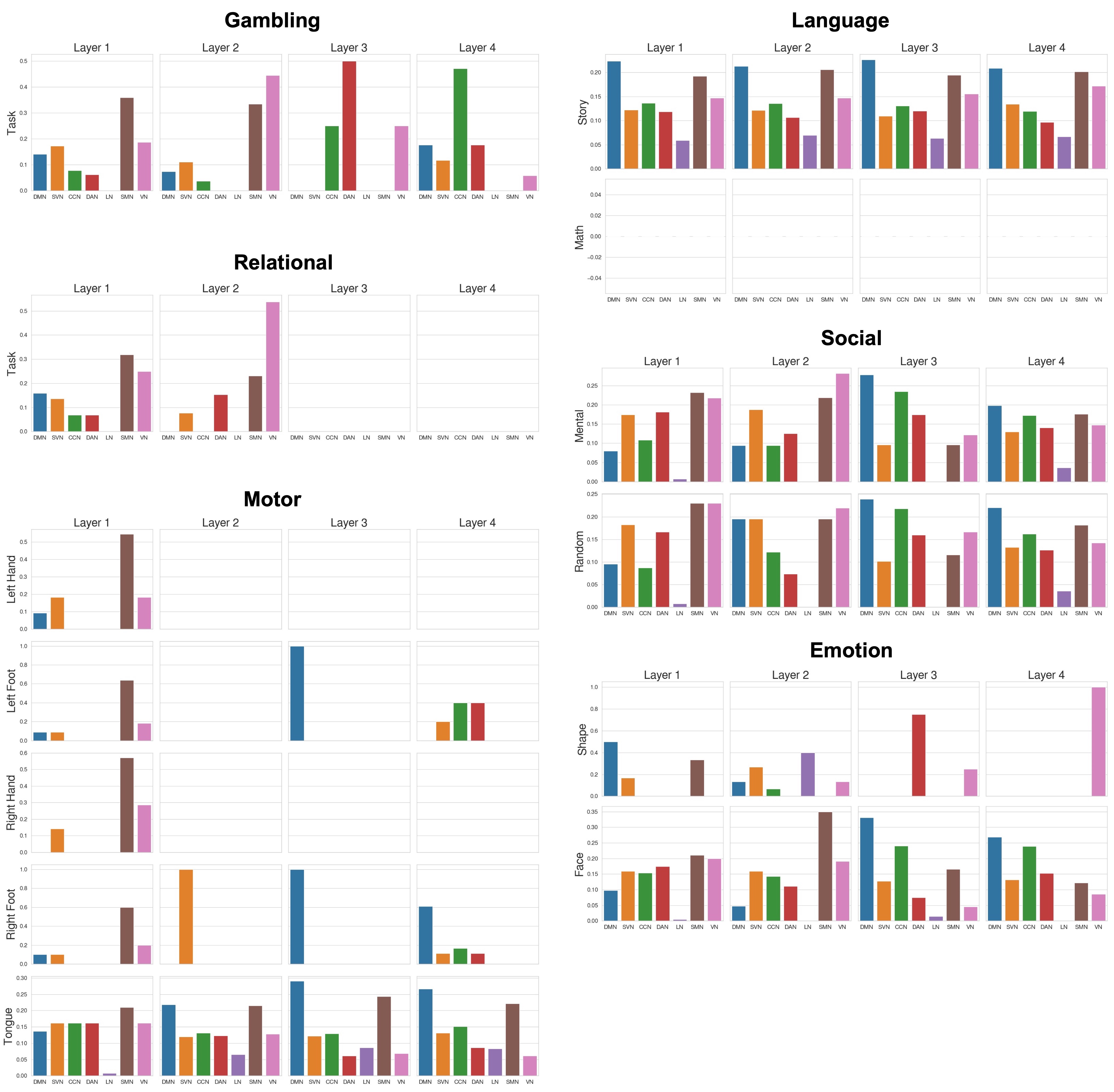}
  \caption{Proportion of statistically significant regions within the 7 ICNs from the HCP-Task spatial attention GLM. Each subtask is contrasted with the baseline subtask, i.e. rest or response.}
  \label{fig:task_node_barplot}
\end{figure}

% \section{Brain plot of significant HCP-Rest and HCP-Task spatial attention regions}
% \label{sec:appendix_brainplot}
\pagebreak

\section{Brain plot of spatially attended regions from HCP-Rest and HCP-Task}
Spatially attended regions of the HCP-Rest and HCP-Task experiments are visualized on a template brain with respect to the 7 ICNs and the four STAGIN layers in Figure \ref{fig:rest_node_brainplot} and \ref{fig:task_node_brainplot}.
Ratio of significant regions between the two hemispheres and the 7 ICNs are also demonstrated as pie plots.
Defining the spatially attended regions follow the result of GLM statistical significance ($p\text{-FWE} < 0.05$) for the HCP-Task, and the regions with top 5-percentile attention score for the HCP-Rest.

\begin{figure}[!h]
  \centering
  \includegraphics[width=1.0\textwidth]{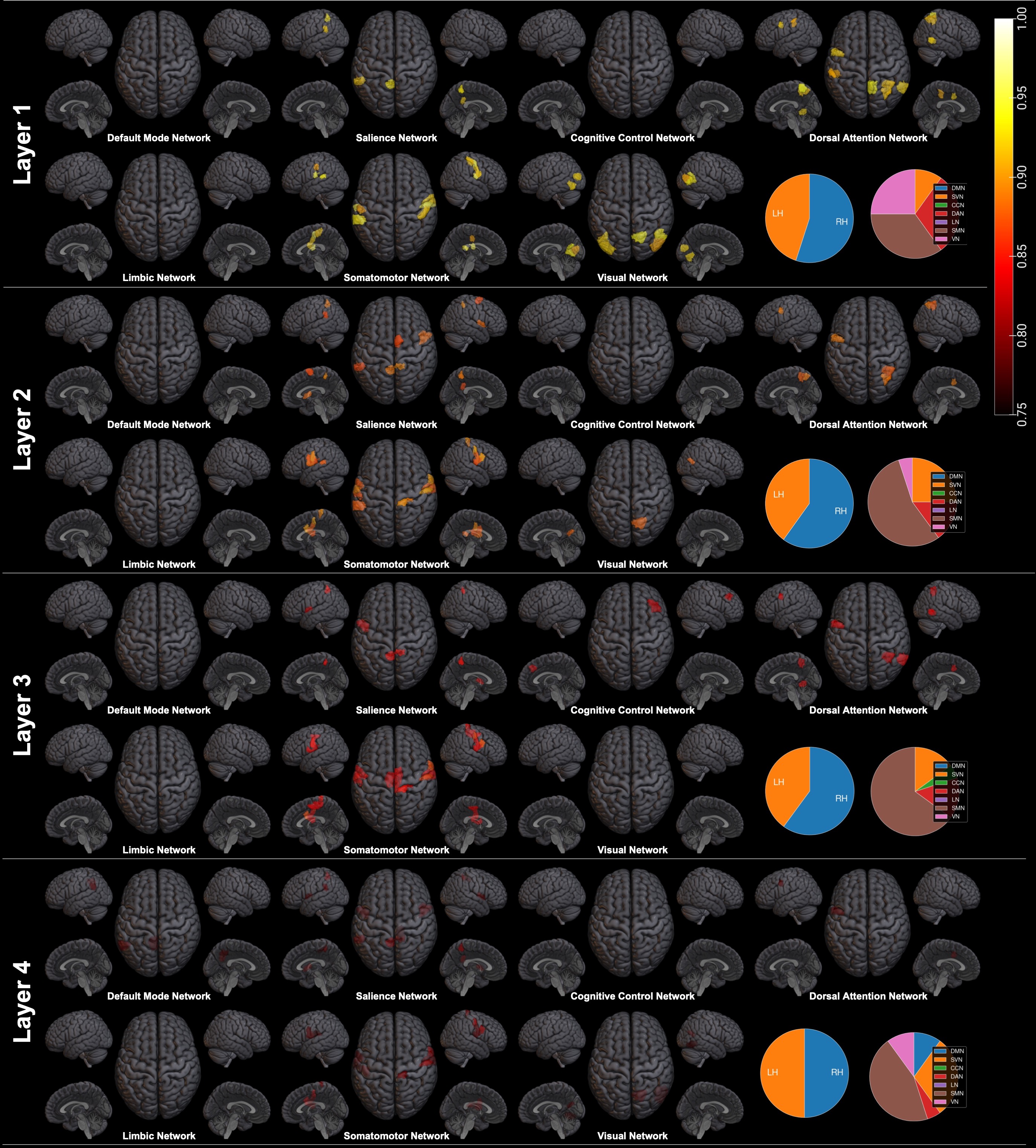}
  \caption{Brain plot of top 5-percentile HCP-Rest spatial attention regions.}
  \label{fig:rest_node_brainplot}
\end{figure}

\begin{figure}[!h]
  \centering
  \includegraphics[width=1.0\textwidth]{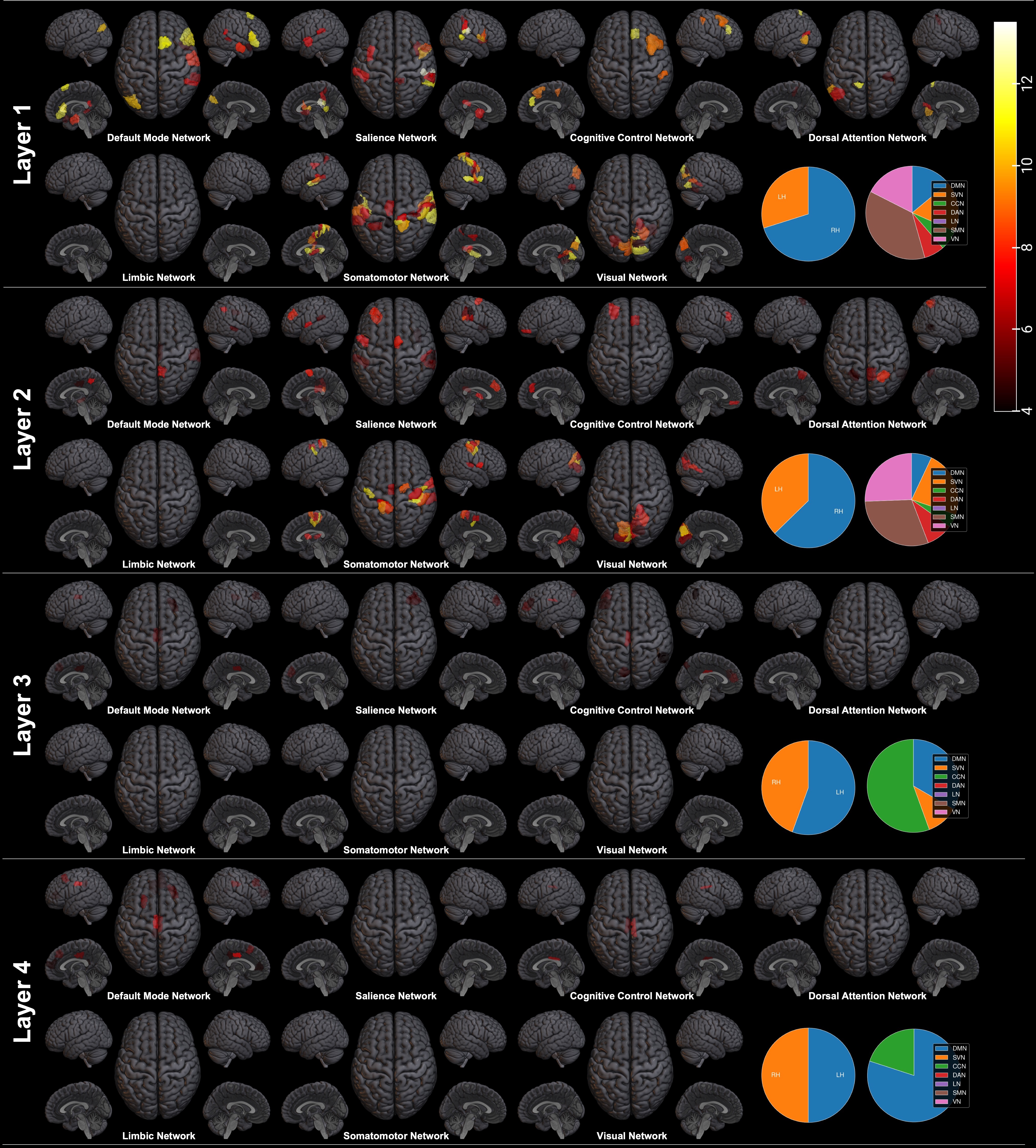}
  \caption{Brain plot of statistically significant HCP-Task working memory spatial attention regions.}
  \label{fig:task_node_brainplot}
\end{figure}

% Optionally include extra information (complete proofs, additional experiments and plots) in the appendix.
% This section will often be part of the supplemental material.

\end{document}